\documentclass{article} 
\usepackage{iclr2027_conference,times}

\usepackage{amsmath,amsfonts,bm}

\def\eqref#1{equation~\ref{#1}}

\def\1{\bm{1}}

\DeclareMathAlphabet{\mathsfit}{\encodingdefault}{\sfdefault}{m}{sl}
\SetMathAlphabet{\mathsfit}{bold}{\encodingdefault}{\sfdefault}{bx}{n}

\usepackage{hyperref}
\usepackage{url}
\usepackage{amsmath,amssymb,amsfonts}
\usepackage{amsthm}
\newtheorem{proposition}{Proposition}
\usepackage{textcomp}
\usepackage{xcolor}

\usepackage[utf8]{inputenc} 
\usepackage[T1]{fontenc}    
\usepackage{nicefrac}       
\usepackage{mathtools}    
\usepackage{bm}      
\usepackage{float}
\newif\ifworkinprogress
\workinprogresstrue 

\ifworkinprogress

\else

\fi
\usepackage{enumitem}
\usepackage{graphicx}
\usepackage{wrapfig}           
\usepackage{multirow}
\usepackage{booktabs}
\usepackage{algorithm}
\usepackage{algpseudocode}

\usepackage{caption}
\usepackage{subcaption}

\title{ChronoSheaf: Joint Spatio-Temporal Spectral Filtering on Cellular Sheaves}

\author{Antiquus S.~Hippocampus, Natalia Cerebro \& Amelie P. Amygdale \thanks{ Use footnote for providing further information
about author (webpage, alternative address)---\emph{not} for acknowledging
funding agencies.  Funding acknowledgements go at the end of the paper.} \\
Department of Computer Science\\
Cranberry-Lemon University\\
Pittsburgh, PA 15213, USA \\
\texttt{\{hippo,brain,jen\}@cs.cranberry-lemon.edu} \\
\And
Ji Q. Ren \& Yevgeny LeNet \\
Department of Computational Neuroscience \\
University of the Witwatersrand \\
Joburg, South Africa \\
\texttt{\{robot,net\}@wits.ac.za} \\
\AND
Coauthor \\
Affiliation \\
Address \\
\texttt{email}
}

\author{Abeer Mostafa$^{1}$, Zahra Ahmadi$^{1,2}$\\
$^1$Peter L. Reichertz Institute for Medical Informatics, Hannover Medical School, Germany\\
$^2$Lower Saxony Center for Artificial Intelligence and Causal Methods in Medicine (CAIMed), \\Hannover, Germany\\
\texttt{\{Mansour.Abeer, Ahmadi.Zahra\}@mh-hannover.de}
}

\iclrfinalcopy 
\begin{document}

\maketitle

\begin{abstract}

Sheaf neural networks generalize graph neural networks by assigning each edge a learned linear restriction map that specifies how the local representations of adjacent nodes relate to one another, thereby mitigating oversmoothing. Existing sheaf architectures, however, define the sheaf Laplacian as an instantaneous spatial operator and therefore cannot explicitly model how signals evolve over time on a graph. We first introduce a spectral sheaf Laplacian, defined jointly over space and time, such that a single operator regulates signal propagation across both the spatial graph and the temporal domain. Directly extending this operator's receptive field in space or time would require increasingly complex sheaf structures. Instead, we propose a spectral filter that expands the effective receptive field along both dimensions without enlarging the underlying sheaf.
Building on this operator, we introduce \textbf{ChronoSheaf}, a spatio-temporal sheaf network with two key components. First, Transitional Restriction Maps (TRM) decompose each node's restriction map into a spatial term and two temporal frames that independently control its relation to neighboring nodes at the preceding and following timestep, enabling asymmetric temporal influence. Second, the Dual Spectral Filter (DSF) applies a single bounded-degree polynomial filter with an adaptively estimated spectral radius to the joint spatio-temporal sheaf operator. We validate ChronoSheaf on a diverse set of spatio-temporal forecasting benchmarks, where it outperforms state-of-the-art baselines. The code is available at: \url{https://anonymous.4open.science/r/ChronoSheaf-1D7D/}.
\end{abstract}

\section{Introduction}
\label{sec:introduction}

Sheaf neural networks generalize graph neural networks by assigning a vector space (stalk) to each node and edge and learning linear restriction maps between them, which lets them model heterogeneous relationships. The ordinary graph Laplacian is recovered as a special case of the constant sheaf, where every stalk is identical, and every restriction map is the identity. By letting restriction maps be learned and non-trivial, \citet{bodnar2022neural} showed that a node's local frame need not agree with its neighbors' outright, giving sheaf neural networks a proven advantage over conventional graph neural networks on heterophilic graphs and against oversmoothing. In every such construction, however, the sheaf is defined purely over the spatial graph. Even when restriction maps are learned from node features \citep{bodnar2022neural}, the base space contains no notion of time, so the sheaf Laplacian only learns from nodes observed at the same instant, and the associated diffusion relaxes toward a static equilibrium, the space of harmonic cochains \citep{hansen2019spectral}. As a consequence, increasing depth or filter order does not let a spatial sheaf Laplacian reach across time. Hence, temporal context must come from a separate module outside the sheaf that does not benefit from its learned local frames.

This restriction has a direct architectural consequence for spatio-temporal graph learning. The dominant paradigm thereby depends on conventional GNN architectures which are prone to oversmoothing and compose two separate operators, a spatial graph operator, and a recurrent, convolutional, or attention-based module for temporal dependence
\citep{li2018diffusion,yu2018spatio,wu2019graph,bai2020adaptive,zheng2020gman,wu2020connecting}. \citet{marisca2025stoversquashing} recently showed this composition has a measurable cost; because causal temporal convolutions propagate information only forward, deeper stacks accumulate disproportionately more paths from early timesteps than from recent ones, so convolutional spatio-temporal GNNs counterintuitively favor temporally distant information over nearby steps.

We address this gap by defining a single spectral sheaf operator jointly over space and time, so that one filtering recursion controls both spatial and temporal dependence. This operator extends the sheaf domain from the spatial graph alone to a space-time graph, and edges connect spatial neighbors at a shared time as well as each node to its neighbors at the adjacent timestep. The network architecture built around this operator is ChronoSheaf, consisting of two components. Transitional restriction maps decompose each node's restriction map into a learned spatial frame that determines how spatial neighbors are compared, plus two further frames, $F^{\mathrm{in}}$ and $F^{\mathrm{out}}$, that independently weight a node's interaction with its neighbors at the preceding and following timestep.
The dual spectral filter then applies a single spectral filter to this joint operator, giving ChronoSheaf a receptive field that grows jointly in graph hops and temporal lag from one spectral recursion (Figure~\ref{fig:intro}). 
Our contributions are as follows:
\begin{itemize}[leftmargin=20pt]
    \item We introduce a new class of sheaf GNNs, called ChronoSheaf, in which we fundamentally rethink the three core components of sheaf GNNs: 
    \begin{enumerate}
        \item The \textit{base space} of the sheaf is a joint space-time graph, so that each node's stalk is compared not only with its spatial neighbors but also with its own state and its neighbors' states at adjacent timesteps.
        \item The \textit{restriction maps} allow each node to respond to its neighbors' past and future states with distinct strengths to capture asymmetric temporal influence.
        \item The \textit{sheaf Laplacian}'s spatial hops and temporal lags are controlled by a spectral filter that acts on the Laplacian's frequency spectrum without needing eigendecomposition. We further provide a theoretical analysis of the resulting operator, characterizing its spectral properties and its joint spatio-temporal receptive field.
    \end{enumerate}
      
    \item We conduct extensive experiments on six real-world datasets spanning diverse domains. The results show that ChronoSheaf resists oversmoothing, captures long-range dependencies more effectively, and outperforms state-of-the-art methods by a notable margin.

\end{itemize}

\begin{figure}[!ht]
    \centering
    \includegraphics[width=0.92\linewidth]{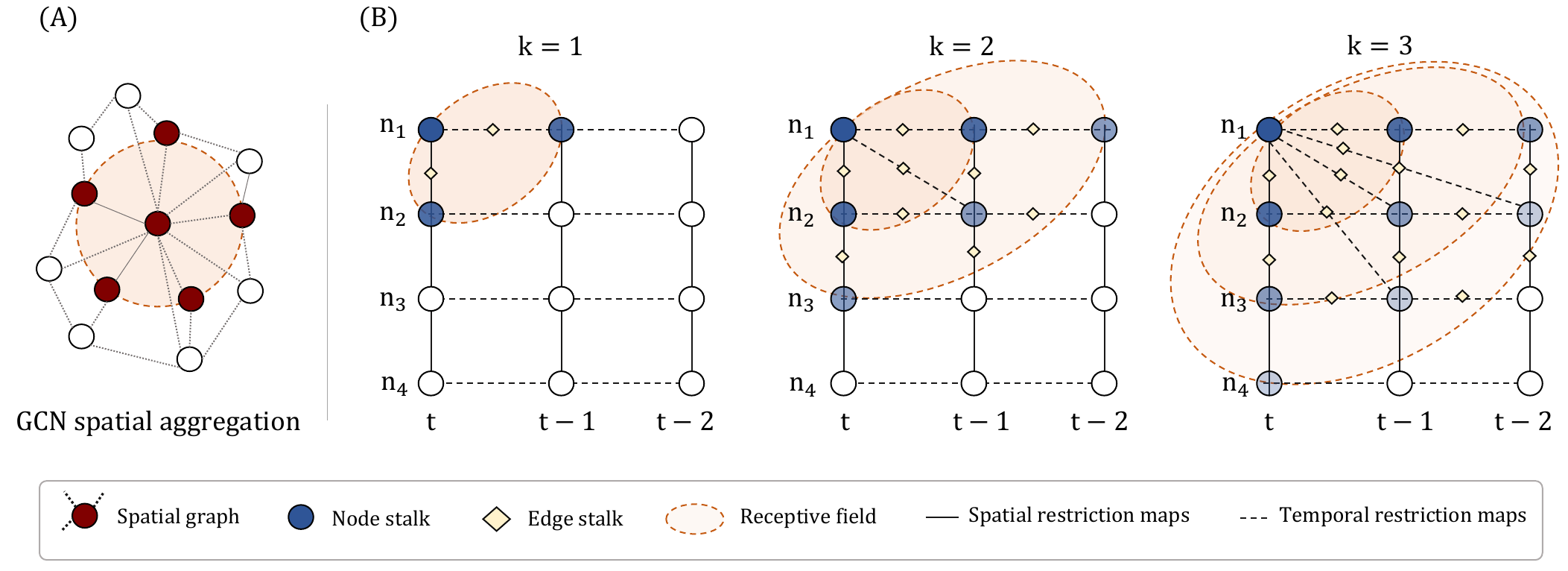}
    \caption{(A) State-of-the-art graph forecasting models aggregate spatial information from neighboring nodes and enlarge their receptive field by stacking additional layers. (B) Our model performs spectral filtering directly on joint space-time cellular sheaves. Higher-order polynomial filters extend the receptive field to larger neighborhoods and capture deeper temporal context.}
    \label{fig:intro}
\end{figure}

\section{Related Work}
\label{sec:related_work}

\textbf{Spectral graph convolution and its limits.}
Spectral filtering on graphs originates in \cite{bruna2014spectral} and is made scalable via polynomial truncation in \cite{defferrard2016convolutional}, whose first-order simplification underlies the method proposed in \cite{kipf2017gcn}. \citet{velickovic2018gat} replace the fixed spectral filter with data-dependent attention weights. Known failure modes of this design, oversmoothing \citep{rusch2023survey, li2018oversmoothing} and oversquashing \citep{alon2021bottleneck}, motivate architectures that enrich what a node can represent. This is the main gap targeted by sheaf neural architectures.

\textbf{Sheaf neural networks.}
\cite{hansen2019spectral} generalize the graph Laplacian to a Hodge Laplacian over a cellular sheaf using linear restriction maps defined between per-node stalks. \cite{bodnar2022neural} instantiate this as a neural network, with a fully parametric diffusion architecture and provable control over Dirichlet-energy decay, showing a learned non-trivial sheaf mitigates oversmoothing where the trivial sheaf cannot. Follow-up work restricts the space of restriction maps: with orthogonal Laplacians \citep{barbero2022connection} and $O(d)$-bundle maps with continuous transport \citep{bamberger2025bundle}. Most recently, \cite{hajij2025copresheaf} subsume sheaf diffusion, GNNs, and attention into a single copresheaf-on-combinatorial-complex formalism. \cite{ribeiro2026cooperative} define cellular sheaves over directed graphs, allowing the diffusion operator to selectively listen to distant nodes and mitigate oversquashing. \cite{fiorini2026sheaves} also propose a directed cellular sheaf and its associated directed sheaf Laplacian, which encodes both graph topology and edge orientation. These extensions, however, address spatial connectivity specifically. From a different angle, we address two questions: how time can be encoded within the sheaf and how to learn the sheaf Laplacian's spectrum.

\textbf{Spatio-temporal graph learning.}
The dominant paradigm combines a spatial graph operator with a separate temporal mechanism, including recurrent units \citep{li2018diffusion}, gated temporal convolutions \citep{yu2018spatio,wu2019graph}, and self-attention \citep{zheng2020gman}. Other approaches enrich the spatial or dynamical component through adaptive graph learning \citep{bai2020adaptive}, zigzag-persistence representations \citep{chen2021z}, neural ODEs \citep{fang2021stgode}, or controlled differential equations \citep{choi2022graph}.
Recent work continues to push the adaptive graph side toward efficiency: \cite{ma2025mage} replace quadratic-cost adaptive graph learning with a linear, mixture-of-experts formulation. Across these methods, spatial and temporal dependence are usually modeled by two distinct operators, composed architecturally, so receptive fields grow additively with depth. A few works operate on a joint space-time domain; STSGCN \citep{song2020spatial} connects each node to itself and its neighbors at adjacent timesteps within a localized space-time graph and graph-time convolutional networks \citep{sabbaqi2023graph} filter over parametric product graphs. These methods demonstrate the value of a joint domain, but they weight its edges with scalars: every node shares a single feature space with its past, its future, and its neighbors, which is precisely the reason behind oversmoothing.

\section{ChronoSheaf}
\label{sec:method}

This section describes ChronoSheaf. Section~\ref{sec:prelim} establishes the sheaf-theoretic and spectral preliminaries. Section~\ref{sec:trm} introduces the transitional restriction maps, the learned frames that control the space-time domain. Section~\ref{sec:laplacian} assembles these frames into the three-part sheaf Laplacian and proves it is a real, symmetric, positive semi-definite operator. Section~\ref{sec:dsf} introduces the dual spectral filter, with the adaptive spectral-radius bound. Finally, Section~\ref{sec:layer-network} describes the proposed layer and the full network.

\subsection{Preliminaries}
\label{sec:prelim}

\paragraph{Spatial graph.} Let $\mathcal{G} = (\mathcal{V}, \mathcal{E})$ be a connected, undirected graph with $|\mathcal{V}| = N$ nodes, and adjacency matrix $A \in \{0,1\}^{N\times N}$. We work with the symmetrically normalized adjacency $\tilde{A} = D^{-1/2}AD^{-1/2}$, where $D = \mathrm{diag}(A\mathbf{1})$, so that the normalized graph Laplacian is $L_{\mathrm{sym}} = I - \tilde{A}$.

\paragraph{Cellular sheaves.} A cellular sheaf $\mathcal{F}$ on $\mathcal{G}$ assigns to each node $v$ a stalk $\mathcal{F}(v) \cong \mathbb{R}^r$ and, for each incident node-edge pair $v \triangleleft e$, a linear restriction map $\mathcal{F}_{v\triangleleft e}: \mathcal{F}(v) \to \mathcal{F}(e)$ into a shared edge stalk \citep{hansen2019spectral}. The associated sheaf Laplacian
\begin{equation}
(L_{\mathcal{F}} x)_v = \sum_{e=(u,v)\in\mathcal{E}} \mathcal{F}_{v\trianglelefteq e}^\top\big(\mathcal{F}_{v\trianglelefteq e}x_v - \mathcal{F}_{u\trianglelefteq e}x_u\big),
\label{eq:sheaf_L}
\end{equation}
is a symmetric, positive semi-definite operator whose quadratic form $x^\top L_{\mathcal{F}} x = \sum_e \|\mathcal{F}_{v\triangleleft e}x_v - \mathcal{F}_{u\triangleleft e}x_u\|_2^2$ measures disagreement between adjacent nodes after each node's local frame is applied \citep{bodnar2022neural}. Setting every restriction map to the identity on $\mathbb{R}^1$ recovers the ordinary, unnormalized combinatorial graph Laplacian $D-A$. The sheaf with identity restriction maps is called the trivial sheaf.

\paragraph{Spectral graph theory.} For a symmetric operator $L$ with eigendecomposition $L = U\Lambda U^\top$, a spectral graph filter applies a function $g(\Lambda)$ of the eigenvalues before returning to the node domain, $g(L)x = U g(\Lambda) U^\top x$. 

In this work, we do not compute this eigendecomposition explicitly, because of the computationally expensive overhead it adds to the network. We use its polynomial (Chebyshev) approximation \citep{defferrard2016convolutional}: with $\tilde{L}$ rescaled to $[-1,1]$, the order-$K$ Chebyshev filter is
\begin{equation}
g_\theta(L)x = \sum_{k=0}^{K}\theta_k\, T_k(\tilde{L})\,x, \qquad T_0(z)=1,\ T_1(z)=z,\ T_k(z) = 2zT_{k-1}(z) - T_{k-2}(z),
\end{equation}
\begin{wrapfigure}{r}{0.44\textwidth}
\vspace{-15pt}
    \centering
    \includegraphics[width=\linewidth]{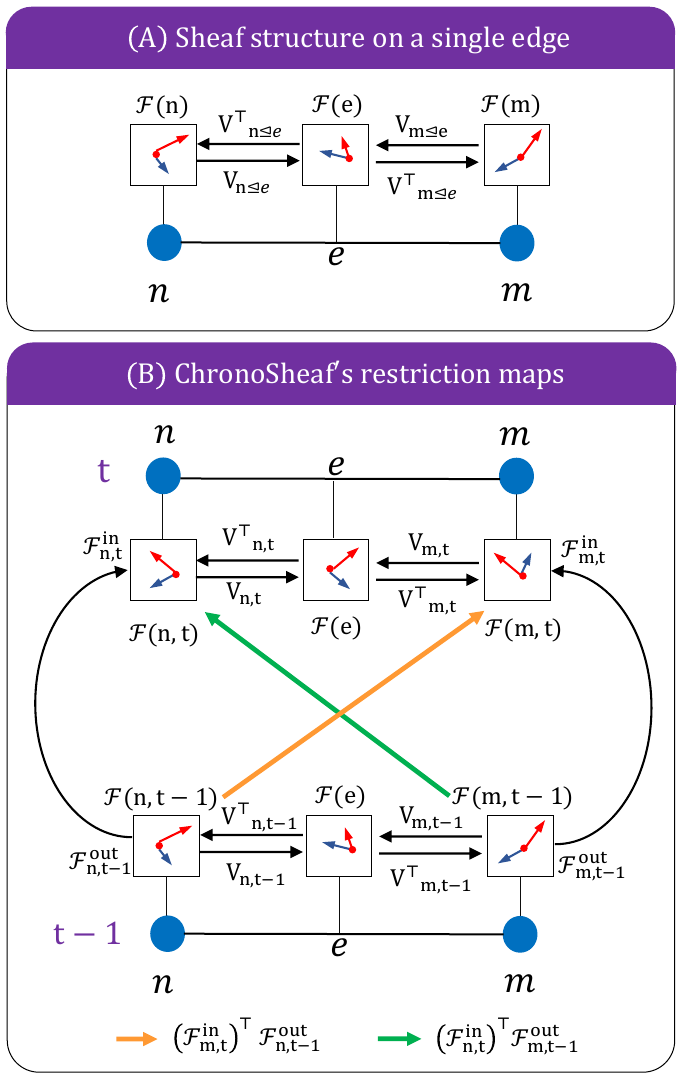}
    \caption{\small (A) Neural sheaf structure~\cite{bodnar2022neural}. (B) ChronoSheaf's joint space-time sheaf structure. }
    \label{fig:sheaf}
    \vspace{-50pt}
\end{wrapfigure}
which is exactly $K$-localized (depends on nodes within $K$ hops) and requires no eigendecomposition.

\paragraph{Joint space-time sheaf domain.} ChronoSheaf's central construction is a cellular sheaf defined not over $\mathcal{G}$ alone, but over a joint space-time graph $\mathcal{G}^\star = (\mathcal{V}^\star,\mathcal{E}^\star)$. Nodes are pairs $(n,t)$ for $n\in\mathcal{V}$ and $t\in\{1,\dots,P\}$, where $P$ is the number of observed timesteps. $\mathcal{E}^\star$ contains a spatial edge between $(n,t)$ and $(m,t)$ whenever $(n,m)\in\mathcal{E}$, and a temporal edge between $(n,t)$ and $(m,t-1)$ for every $m$ that is either $n$ itself (a self-loop) or a spatial neighbor of $n$. Sections~\ref{sec:trm}-~\ref{sec:laplacian} add to $\mathcal{G}^\star$ node-homogeneous frames and assemble its sheaf Laplacian; Section~\ref{sec:dsf} shows that a single Chebyshev recursion over this one operator is what lets ChronoSheaf's receptive field grow jointly in graph hops and temporal lag.

\paragraph{Forecasting Task.} The model takes input signal $x \in \mathbb{R}^{N \times P \times d_{\text{in}}}$, obtained from the raw observations after standard preprocessing (segmentation into $P$ timesteps of $d_{\text{in}}$ channels). Given a history of such signals, the goal is to learn a mapping $F$ that predicts the next $H$ steps of the underlying scalar measurement at each node.

\subsection{Transitional Restriction Maps (TRM)}
\label{sec:trm}

ChronoSheaf's first component assigns learned frames to the joint graph $\mathcal{G}^\star$. Let $B \in \mathbb{R}^{3 \times N \times r \times d}$ be a learnable base tensor, holding one candidate frame of each of three kinds (spatial, outgoing-temporal, incoming-temporal frames) per node. At each timestep, a learned gate rescales the three candidates for node $n$ as a function of the layer input $h_{n,t} \in \mathbb{R}^{d}$:
\begin{equation}
g_{n,t} = \mathbf{1} + \tfrac{1}{2}\tanh(W_g h_{n,t}), \qquad W_g \in \mathbb{R}^{3 \times d},
\end{equation}
\begin{equation}
    V_{n,t} = g_{n,t,1} B_{1,n}, \qquad F^{\mathrm{out}}_{n,t} = g_{n,t,2} B_{2,n}, \qquad F^{\mathrm{in}}_{n,t} = g_{n,t,3} B_{3,n},
\end{equation}

so that $V_{n,t}, F^{\mathrm{in}}_{n,t}, F^{\mathrm{out}}_{n,t} \in \mathbb{R}^{r \times d}$ are node $n$'s three restriction maps at time $t$. Since $\tanh(\cdot)\in(-1,1)$, each component of $g_{n,t}\in\mathbb{R}^3$ lies in $(0.5,1.5)$ to bound every frame's scale for numerical stability.

We call $F^{\mathrm{in}}$ and $F^{\mathrm{out}}$ \emph{transitional restriction maps}. $V_{n,t}$ plays the role of an ordinary spatial sheaf frame, learning how node $n$'s representation is compared with its spatial neighbors at the same time. $F^{\mathrm{out}}_{n,t}$ and $F^{\mathrm{in}}_{n,t}$ learn how $n$'s representation is compared across a single-step transition in time: $F^{\mathrm{out}}_{n,t}$ sets how strongly $n$ projects its current signal onto the following timestep, and $F^{\mathrm{in}}_{n,t}$ sets how strongly $n$ accepts a neighbor's projection from the preceding one. Because these are two independently learned maps, a node can weight its sensitivity to the past differently from its sensitivity to the future. 

\subsection{The ChronoSheaf Laplacian}
\label{sec:laplacian}

\paragraph{Spatial term.} We instantiate the sheaf Laplacian of Eq.~\ref{eq:sheaf_L} to node-homogeneous restriction maps, in which every spatial edge incident to $(n,t)$ uses the same spatial frame, $\mathcal{F}_{(n,t) \trianglelefteq e} = V_{n,t}$, and weight each edge by the normalized adjacency $\tilde{A}$:
\begin{equation}
    (L_{sp}x)_{n,t} = \sum_{m} \tilde{A}(n,m) V_{n,t}^\top \left( V_{n,t} x_{n,t} - V_{m,t} x_{m,t} \right). \label{eq:lsp}
\end{equation}
Because $V_{n,t} \in \mathbb{R}^{r \times d}$, the restriction maps project the $d$-dimensional feature stalks into a shared $r$-dimensional edge space before measuring disagreement.

\paragraph{Temporal terms.} Let $\hat{A} = D^{-1/2}(A + I)D^{-1/2}$ be the self-loop-augmented normalized adjacency, and let $\hat{d}(n) := \sum_m \hat{A}(n, m)$ denote its weighted row sum at node $n$. The two transitional terms are:
\begin{align}
    (L_{in}x)_{n,t} &= \hat{d}(n) (F^{in}_{n,t})^\top F^{in}_{n,t} x_{n,t} - \sum_{m} \hat{A}(n,m) (F^{in}_{n,t})^\top F^{out}_{m,t-1} x_{m,t-1}, \label{eq:lin}\\
    (L_{out}x)_{n,t} &= \hat{d}(n) (F^{out}_{n,t})^\top F^{out}_{n,t} x_{n,t} - \sum_{m} \hat{A}(n,m) (F^{out}_{n,t})^\top F^{in}_{m,t+1} x_{m,t+1},\label{eq:lout}
\end{align}
each masked to zero at the first and last timestep, where no earlier or later patch exists. The full Laplacian is then:
\begin{equation}
    L = L_{sp} + L_{in} + L_{out}, 
\end{equation}
a linear operator on $\mathbb{R}^{N\times P\times d}$ that we equivalently view as a block matrix with one $d\times d$ block $L_{(n,t),(m,s)}$ per pair of joint-graph nodes, acting on $x_{m,s}\in\mathbb{R}^d$ (Figure~\ref{fig:sheaf}).

\begin{proposition}
\label{prop:symmetric}
$L$ is a real, symmetric operator on $\mathcal{G}^\star$: $L_{(n,t),(m,s)}^\top = L_{(m,s),(n,t)}$ for every pair of joint-graph nodes, even though $F^{\mathrm{in}} \neq F^{\mathrm{out}}$ in general.
\end{proposition}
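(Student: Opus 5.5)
The plan is to verify the block identity $L_{(n,t),(m,s)}^\top = L_{(m,s),(n,t)}$ directly. I will read off every nonzero $d\times d$ block of $L = L_{sp}+L_{in}+L_{out}$ from Eqs.~\ref{eq:lsp}, \ref{eq:lin} and \ref{eq:lout}, and sort the blocks by the time offset $s-t\in\{0,-1,+1\}$. Realness is immediate, since every block is a product of real frames $V, F^{\mathrm{in}}, F^{\mathrm{out}}$ and real normalized adjacency entries. The only structural facts I will use are that $\tilde{A}=D^{-1/2}AD^{-1/2}$ and $\hat{A}=D^{-1/2}(A+I)D^{-1/2}$ are symmetric matrices, because $A$ is symmetric ($\mathcal{G}$ is undirected), and that all terms are linear in $x$.

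\textbf{Same-time blocks ($s=t$).} Only $L_{sp}$ contributes off-diagonal blocks here, and it gives $L_{(n,t),(m,t)} = -\tilde{A}(n,m)\,V_{n,t}^\top V_{m,t}$ for $m\neq n$. Its transpose is $-\tilde{A}(n,m)\,V_{m,t}^\top V_{n,t}$. Since $\tilde{A}(n,m)=\tilde{A}(m,n)$, this equals $L_{(m,t),(n,t)}$.

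The diagonal block $L_{(n,t),(n,t)}$ collects three pieces:
\begin{itemize}
\item $\big(\sum_m \tilde{A}(n,m)\big)V_{n,t}^\top V_{n,t}$ from $L_{sp}$;
\item $\hat{d}(n)(F^{\mathrm{in}}_{n,t})^\top F^{\mathrm{in}}_{n,t}$ from $L_{in}$ (if $t>1$);
\item $\hat{d}(n)(F^{\mathrm{out}}_{n,t})^\top F^{\mathrm{out}}_{n,t}$ from $L_{out}$ (if $t<P$).
\end{itemize}
Each piece is a scalar times a Gram matrix $M^\top M$, hence symmetric. A possible self-term $m=n$ in the spatial sum would also be of this Gram form, so it causes no problem.

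\textbf{Cross-time blocks ($s=t\pm1$).} This is the step where $F^{\mathrm{in}}\neq F^{\mathrm{out}}$ matters, and it is the heart of the proof. The block $L_{(n,t),(m,t-1)}$ comes only from $L_{in}$ and equals $-\hat{A}(n,m)(F^{\mathrm{in}}_{n,t})^\top F^{\mathrm{out}}_{m,t-1}$. The mirror block $L_{(m,t-1),(n,t)}$ comes only from $L_{out}$, evaluated at node $m$ and time $t-1$, looking forward to $(n,t)$. It equals $-\hat{A}(m,n)(F^{\mathrm{out}}_{m,t-1})^\top F^{\mathrm{in}}_{n,t}$. Transposing the first expression and using $\hat{A}(n,m)=\hat{A}(m,n)$ gives exactly the second.

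So $F^{\mathrm{in}}$ and $F^{\mathrm{out}}$ are never required to coincide. The asymmetry between them is absorbed because each temporal edge is represented twice, once in $L_{in}$ at the later endpoint and once in $L_{out}$ at the earlier endpoint, with the roles of the two frames swapped. The case $s=t+1$ is the same identity with the two nodes relabeled. Blocks with $|s-t|\ge 2$, or with $\hat{A}(n,m)=0$, vanish on both sides.

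\textbf{Main obstacle: the boundary masking.} I must check that the masks at $t=1$ and $t=P$ do not destroy the pairing. The block $L_{(n,t),(m,t-1)}$ is present exactly when $t\in\{2,\dots,P\}$, since $L_{in}$ is masked only at $t=1$. The mirror block comes from $L_{out}$ at time $t-1\in\{1,\dots,P-1\}$, and it is therefore present under exactly the same condition, since $L_{out}$ is masked only at $t-1=P$. Hence the two blocks are both present or both zero.

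The masked diagonal contributions only remove Gram terms, so they preserve symmetry of the diagonal blocks. Combining the three cases yields $L_{(n,t),(m,s)}^\top = L_{(m,s),(n,t)}$ for all pairs, i.e.\ $L$ is real and symmetric.
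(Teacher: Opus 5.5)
Your proposal is correct and follows essentially the same route as the paper's proof. In both arguments, diagonal blocks are sums of Gram matrices, same-time blocks are symmetric via $\tilde{A}(n,m)=\tilde{A}(m,n)$, and each cross-time block from $L_{\mathrm{in}}$ at the later endpoint is the transpose of the mirror block from $L_{\mathrm{out}}$ at the earlier endpoint. Your explicit check of the boundary masks ($L_{\mathrm{in}}$ at $t=1$, $L_{\mathrm{out}}$ at $t=P$), showing that each pair of mirror blocks is present or absent together, is a detail the paper leaves implicit and is worth keeping.
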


\textbf{Proof.} Every diagonal block is a sum of Gram matrices, hence symmetric. For $s=t$ (spatial neighbors), Eq.~\ref{eq:lsp} gives $L_{(n,t),(m,t)} = -\tilde{A}(n,m)V_{n,t}^\top V_{m,t}$; since $(V_{n,t}^\top V_{m,t})^\top = V_{m,t}^\top V_{n,t}$ and $\tilde{A}(n,m)=\tilde{A}(m,n)$, this equals $L_{(m,t),(n,t)}^\top$. For $s=t-1$, Eq.~\ref{eq:lin} gives $L_{(n,t),(m,t-1)} = -\hat{A}(n,m)(F^{\mathrm{in}}_{n,t})^\top F^{\mathrm{out}}_{m,t-1}$, while Eq.~\ref{eq:lout} evaluated at node $m$, time $t-1$, gives $L_{(m,t-1),(n,t)} = -\hat{A}(m,n)(F^{\mathrm{out}}_{m,t-1})^\top F^{\mathrm{in}}_{n,t}$. Transposing the first block, $\big(-\hat{A}(n,m)(F^{\mathrm{in}}_{n,t})^\top F^{\mathrm{out}}_{m,t-1}\big)^\top = -\hat{A}(n,m)(F^{\mathrm{out}}_{m,t-1})^\top F^{\mathrm{in}}_{n,t}$, which equals $L_{(m,t-1),(n,t)}$ since $\hat{A}(n,m)=\hat{A}(m,n)$. Symmetry holds block by block, and therefore for $L$ as a whole; it follows from $L_{\mathrm{in}}$'s backward-looking mapping and $L_{\mathrm{out}}$'s forward-looking mapping being adjoint to one another. $\square$

\begin{proposition}[Positive semi-definiteness]
\label{prop:psd}
$L = L_{\mathrm{sp}} + L_{\mathrm{in}} + L_{\mathrm{out}}$ is positive semi-definite.
\end{proposition}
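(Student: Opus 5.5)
The plan is to show that the quadratic form $x^\top L x$ is a nonnegative combination of squared norms, exactly as for the classical sheaf Laplacian in Eq.~\ref{eq:sheaf_L}. By Proposition~\ref{prop:symmetric}, $L$ is real symmetric, so $x^\top L x \ge 0$ for all $x\in\mathbb{R}^{N\times P\times d}$ is equivalent to positive semi-definiteness. The form splits as $x^\top L_{\mathrm{sp}}x + x^\top(L_{\mathrm{in}}+L_{\mathrm{out}})x$. I will treat the spatial part alone, and the two transitional terms together.

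\emph{Spatial term.} Fix $t$ and write $y_{n}=V_{n,t}x_{n,t}\in\mathbb{R}^r$. Then
\[
x^\top L_{\mathrm{sp}}x=\sum_t\sum_{n,m}\tilde{A}(n,m)\,y_n^\top(y_n-y_m).
\]
Since $\tilde{A}$ is symmetric with nonnegative entries, symmetrizing in $(n,m)$ gives $\tfrac12\sum_t\sum_{n,m}\tilde{A}(n,m)\|y_n-y_m\|_2^2\ge 0$.

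\emph{Transitional terms.} The idea is that $L_{\mathrm{in}}$ and $L_{\mathrm{out}}$ are not separately PSD in an obvious way, but their sum is. Set $a_{n,t}=F^{\mathrm{in}}_{n,t}x_{n,t}$ and $b_{n,t}=F^{\mathrm{out}}_{n,t}x_{n,t}$, both in $\mathbb{R}^r$. Using $\hat d(n)=\sum_m\hat A(n,m)$, the $L_{\mathrm{in}}$ contribution (supported on $t\ge2$) is
\[
\sum_{t\ge2}\sum_{n,m}\hat A(n,m)\big(\|a_{n,t}\|^2-a_{n,t}^\top b_{m,t-1}\big).
\]
For the $L_{\mathrm{out}}$ contribution (supported on $t\le P-1$), I will shift $t\mapsto t-1$ and swap $n\leftrightarrow m$, using $\hat A(n,m)=\hat A(m,n)$ and $\hat d(m)=\sum_n\hat A(n,m)$. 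This rewrites it as
\[
\sum_{t\ge2}\sum_{n,m}\hat A(n,m)\big(\|b_{m,t-1}\|^2-b_{m,t-1}^\top a_{n,t}\big).
\]
Adding the two yields
\[
\sum_{t=2}^{P}\sum_{n,m}\hat A(n,m)\,\|a_{n,t}-b_{m,t-1}\|_2^2 .
\]
This is $\ge0$ because $\hat A=D^{-1/2}(A+I)D^{-1/2}$ is entrywise nonnegative.

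The main point needing care is the bookkeeping of the boundary masks. The reindexing must pair exactly the $t\ge2$ terms of $L_{\mathrm{in}}$ with the $t\le P-1$ terms of $L_{\mathrm{out}}$. This works because masking $L_{\mathrm{in}}$ at $t=1$ and $L_{\mathrm{out}}$ at $t=P$ removes precisely the terms that have no partner, so no unpaired diagonal or cross term survives. The diagonal weights $\hat d(n)$ are exactly what is needed to complete the square on each temporal edge. Summing the two nonnegative pieces gives $x^\top Lx\ge0$, and with the symmetry of Proposition~\ref{prop:symmetric} this proves the claim. As a byproduct, $x^\top Lx$ is the Dirichlet energy of the space-time sheaf: a weighted sum of spatial and transitional disagreements.
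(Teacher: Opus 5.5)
Your proof is correct and follows essentially the same route as the paper. Like the paper, you symmetrize the $L_{\mathrm{sp}}$ form into $\tfrac12\sum\tilde A(n,m)\|V_{n,t}x_{n,t}-V_{m,t}x_{m,t}\|_2^2$, then reindex $L_{\mathrm{out}}$ ($t\mapsto t-1$, $n\leftrightarrow m$) and complete the square to get $\sum_{t\ge2}\sum_{n,m}\hat A(n,m)\|F^{\mathrm{in}}_{n,t}x_{n,t}-F^{\mathrm{out}}_{m,t-1}x_{m,t-1}\|_2^2$. Your explicit check that the boundary masks pair up exactly is a useful addition; the paper handles this only implicitly, by summing over ``all valid $(n,t,m)$ with $t\ge2$''.
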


\textbf{Proof.} \emph{Spatial term.} Fix a patch $t$ and let $y_{n,t} := V_{n,t}x_{n,t}\in\mathbb{R}^r$, with components $y_{n,t,k}$, $k=1,\dots,r$. Expanding Eq.~\ref{eq:lsp},
\[
x^\top L_{\mathrm{sp}} x = \sum_{t,k} y_{\cdot,t,k}^\top\big(\mathrm{diag}(\tilde A\mathbf{1}) - \tilde{A}\big)y_{\cdot,t,k} = \sum_{t,k}\tfrac12\sum_{n,m}\tilde{A}(n,m)\big(y_{n,t,k}-y_{m,t,k}\big)^2 \ge 0,
\]
using that $\mathrm{diag}(W\mathbf{1}) - W$ is positive semi-definite for any symmetric $W\ge0$.

\emph{Temporal terms.} Let $u_{n,t} := F^{\mathrm{in}}_{n,t}x_{n,t}$ and $w_{n,t} := F^{\mathrm{out}}_{n,t}x_{n,t}$, both in $\mathbb{R}^r$. Expanding Eq.~\ref{eq:lin}-Eq.\ref{eq:lout},
\[
x^\top L_{\mathrm{in}} x = \sum_{n,t} \hat{d}(n)\|u_{n,t}\|^2 - \sum_{n,t,m} \hat{A}(n,m)\,u_{n,t}^\top w_{m,t-1},
\]
\[
x^\top L_{\mathrm{out}} x = \sum_{n,t} \hat{d}(n)\|w_{n,t}\|^2 - \sum_{n,t,m} \hat{A}(n,m)\,w_{n,t}^\top u_{m,t+1}.
\]
Reindexing the second cross term ($n\leftrightarrow m$, $t\mapsto t-1$) and using $\hat{A}(n,m)=\hat{A}(m,n)$ shows the two cross terms are equal; reindexing $\sum_{n,t}\hat{d}(n)\|w_{n,t}\|^2$ the same way aligns every diagonal term with a cross term at the same $(n,t,m)$. Completing the square gives the exact identity
\[
x^\top(L_{in} + L_{out})x = \sum_{n,t,m} \hat{A}(n, m) \| F^{in}_{n,t} x_{n,t} - F^{out}_{m,t-1} x_{m,t-1} \|^2_2,
\]
the sum ranging over all valid $(n,t,m)$ with $t\ge2$; every term is nonnegative since $\hat{A}(n,m)\ge0$. Summing the two nonnegative quadratic forms gives $x^\top L x \ge 0$ for all $x$. Figure~\ref{fig:trm-example} illustrates how a node's past affect its neighbor's present. $\square$

\begin{figure}[!ht]
    \centering
    \includegraphics[width=0.89\linewidth]{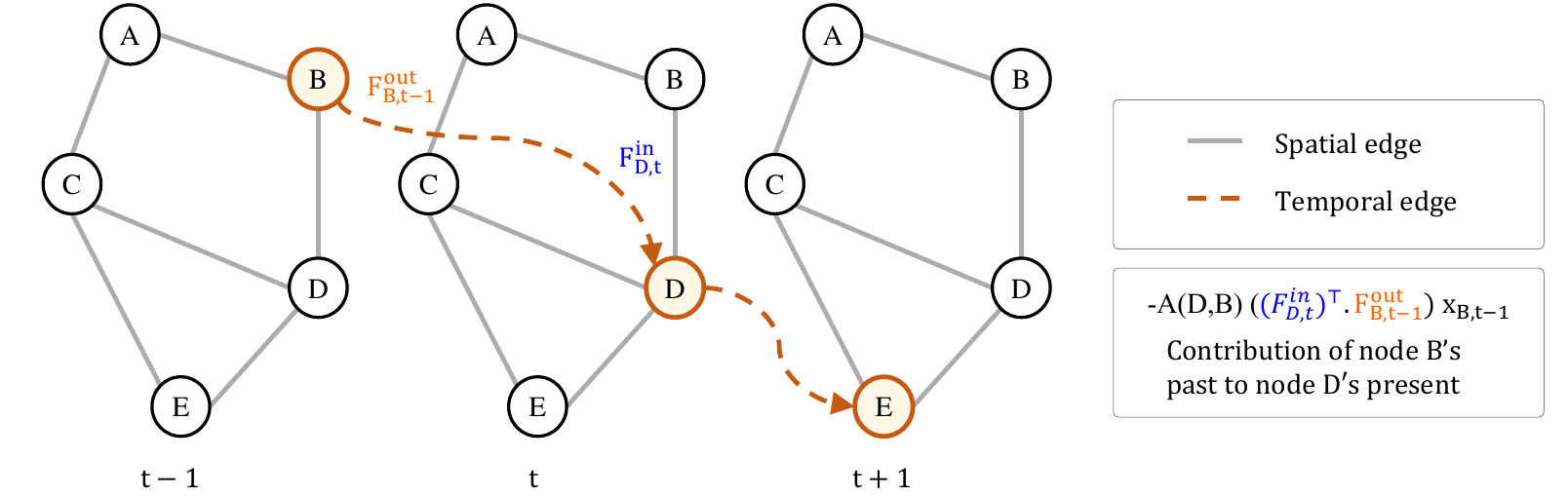}
    \caption{Node B's signal at $t-1$ contributes $-A(D,B)(F^{\mathrm{in}}_{D,t})^\top F^{\mathrm{out}}_{B,t-1}\,x_{B,t-1}$ to $(Lx)_{D,t}$, transporting B's signal from its outgoing frame into D's incoming frame. The analogous term one step later carries D's present into E's future.}
    \label{fig:trm-example}
\end{figure}

\subsection{Dual Spectral Filter (DSF)}
\label{sec:dsf}

ChronoSheaf's second component filters $L$ with the Chebyshev polynomial of Section~\ref{sec:prelim}, using per-channel coefficients $\theta_{c,0:K}$: the output is $\sum_k \big(T_k(\tilde{L})x\big)\,\Theta_k$, where $\Theta_k=\mathrm{diag}(\theta_{1,k},\dots,\theta_{d,k})$ scales the $d$ channels, so one order-$K$ recursion over $L$ filters jointly across the spatial and temporal edges of $\mathcal{E}^\star$. We call this the \emph{dual spectral filter}: a single Chebyshev recursion that is simultaneously a spatial and a temporal filter, since Propositions~\ref{prop:symmetric} and \ref{prop:psd} let $L$ carry both kinds of edges as one real, symmetric, positive semi-definite operator.

Applying this filter requires an estimate of the sheaf Laplacian's largest eigenvalue $\lambda_{\max}(L)$. Because $V, F^{\mathrm{in}}, F^{\mathrm{out}}$ are learned and unconstrained in norm, no fixed bound holds for $L$ in general, and an eigendecomposition at every forward pass is too costly. We instead adaptively bound $\lambda_{\max}(L)$: for a real symmetric block matrix with $d\times d$ blocks $L_{(n,t),(m,s)}$,
\begin{equation}
\lambda_{\max}(L) \;\le\; \max_{(n,t)} \Big[ \big\|L_{(n,t),(n,t)}\big\|_2 + \sum_{(m,s)\neq(n,t)} \big\|L_{(n,t),(m,s)}\big\|_2 \Big],
\label{eq:block-gershgorin}
\end{equation}
where $\|\cdot\|_2$ is the spectral norm. This is a generalization of the known scalar Gershgorin bound (entrywise absolute value) to the $d\times d$ blocks of $L$. We bound Eq.~\ref{eq:block-gershgorin} directly from the frames of Section~\ref{sec:trm}, using the triangle and submultiplicative inequalities for the spectral norm ($\|A+B\|_2\le\|A\|_2+\|B\|_2$, $\|AB\|_2\le\|A\|_2\|B\|_2$, $\|A^\top A\|_2 = \|A\|_2^2$):
\begin{equation}
\begin{aligned}
\hat\lambda_{\max} := \max_{(n,t)} \Big[\, &s_n\|V_{n,t}\|_2^2 + \hat{d}(n)\big(\|F^{\mathrm{in}}_{n,t}\|_2^2 + \|F^{\mathrm{out}}_{n,t}\|_2^2\big) \\
&+ \sum_m \tilde{A}(n,m)\|V_{n,t}\|_2\|V_{m,t}\|_2 \\
&+ \sum_m \hat{A}(n,m)\big(\|F^{\mathrm{in}}_{n,t}\|_2\|F^{\mathrm{out}}_{m,t-1}\|_2 + \|F^{\mathrm{out}}_{n,t}\|_2\|F^{\mathrm{in}}_{m,t+1}\|_2\big) \Big],
\end{aligned}
\label{eq:gershgorin}
\end{equation}
with $s_n=\sum_m\tilde{A}(n,m)$. Every quantity in Eq.~\ref{eq:gershgorin} is a spectral norm of one $r\times d$ frame, so $\hat\lambda_{\max}$ is tracked once per forward pass at negligible cost and detached from the computation graph (no backpropagation through $\hat\lambda_{\max}$).

\begin{proposition}[Receptive field]
\label{prop:receptive-field}
The output at $(n,t)$ after $M$ stacked ChronoSheaf layers depends on the input at $(m,s)$ with $d_{\mathcal{G}}(n,m) + |t-s| \le 2MK$, where $d_{\mathcal{G}}$ is graph-hop distance in $\mathcal{G}$ and $t,s$ index patches.
\end{proposition}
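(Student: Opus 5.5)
The plan is to prove a single-layer locality bound and then induct over layers. Throughout I use the weighted distance $\rho((n,t),(m,s)) := d_{\mathcal{G}}(n,m) + |t-s|$ on $\mathcal{V}^\star$. By the triangle inequality in $\mathcal{G}$ and in $\mathbb{Z}$, $\rho$ is a metric on $\mathcal{V}^\star$.

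\emph{Step 1 (one application of $L$).} I read off the block sparsity of $L$ from Eqs.~\ref{eq:lsp}--\ref{eq:lout}. The block $L_{(n,t),(m,s)}$ can be nonzero only in three cases:
\begin{enumerate}
\item $s=t$ and $\tilde A(n,m)\neq 0$ or $m=n$, so that $\rho\le 1$;
\item $|s-t|=1$ and $\hat A(n,m)\neq0$, which means $m=n$ or $m$ is a spatial neighbor of $n$, so that $\rho\le 2$;
\item the diagonal block, where $\rho=0$.
\end{enumerate}
Hence $(Lx)_{n,t}$ depends only on $x_{m,s}$ with $\rho((n,t),(m,s))\le 2$. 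The worst case is the temporal edge to a spatial neighbor at the adjacent patch, and this is where the factor $2$ in $2MK$ comes from. The rescaled operator $\tilde L = \tfrac{2}{\hat\lambda_{\max}}L - I$ has the same block sparsity pattern plus the diagonal. $T_k$ is a polynomial of degree $k$, and a product of $k$ operators each of $\rho$-radius $2$ has $\rho$-radius $2k$ by the triangle inequality. So an induction on the recursion $T_k = 2\tilde L T_{k-1} - T_{k-2}$ shows that $T_k(\tilde L)$ has $\rho$-radius at most $2k\le 2K$. The per-channel diagonal scalings $\Theta_k$ act within each stalk and do not enlarge the support. Consequently the DSF output at $(n,t)$ depends only on inputs within $\rho$-distance $2K$.

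\emph{Step 2 (the rest of the layer).} The frames $V_{n,t},F^{\mathrm{in}}_{n,t},F^{\mathrm{out}}_{n,t}$ are themselves functions of $h_{n,t}$ through the gate, so $L$ depends on the input and I must check that this does not widen the dependence. The coefficient multiplying $x_{m,s}$ in a path of length $k$ involves only frames at the nodes visited along that path, and all of these lie within $\rho$-distance $2k$ of $(n,t)$. So the radius-$2K$ bound persists when the frames are data-dependent. The remaining layer operations (channel mixing, pointwise nonlinearities, residuals, normalization) are assumed to act per joint-graph node, as is standard, and so add no radius.

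\emph{Main obstacle.} The difficulty is the scalar $\hat\lambda_{\max}$ of Eq.~\ref{eq:gershgorin}. It is a global maximum over all $(n,t)$ and therefore formally depends on every input. I would first try to interpret the statement in the sense of the paper's detached estimate. That is, $\hat\lambda_{\max}$ is treated as a constant, because it is detached from the computation graph, so the Jacobian $\partial\,\mathrm{out}_{n,t}/\partial x_{m,s}$ vanishes whenever $\rho>2MK$. Equivalently, the claim is a statement about the support of the linear operator $\sum_k T_k(\tilde L)\Theta_k$ for fixed frames and fixed spectral scale. I would state this convention explicitly before the proof.

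\emph{Step 3 (induction over layers).} Write $\mathrm{out}^{(M)}$ for the output of $M$ stacked layers. Suppose $\mathrm{out}^{(M-1)}_{m',s'}$ depends only on inputs within radius $2(M-1)K$ of $(m',s')$. The $M$-th layer reads only $(m',s')$ with $\rho((n,t),(m',s'))\le 2K$. By the triangle inequality for $\rho$, $\mathrm{out}^{(M)}_{n,t}$ then depends only on inputs with $\rho\le 2MK$, which is the claim.
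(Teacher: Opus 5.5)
Your proof is correct and follows the same route as the paper's proof. The block sparsity of $L$ gives each application a $d_{\mathcal{G}}+|t-s|$ radius of $2$, the degree-$K$ Chebyshev polynomial gives radius $2K$ per layer, and stacking $M$ layers adds these radii. The paper's proof passes over the points you handle explicitly: the input-dependent frames, the per-node residual and LayerNorm operations, and the fact that $\hat\lambda_{\max}$ is a global maximum that must be treated as a fixed scalar for the statement to hold literally. Your version is therefore a more rigorous rendering of the same argument.
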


\textbf{Proof.} Every nonzero block of $L = L_{\mathrm{sp}} + L_{\mathrm{in}} + L_{\mathrm{out}}$ connects $(n,t)$ to some $(m,s)$ with $d_{\mathcal{G}}(n,m)+|t-s|\le 2$ because 
$L_{\mathrm{sp}}$ contributes same-patch spatial neighbors ($d_{\mathcal{G}}(n,m)\le1$, $|t-s|=0$); $L_{\mathrm{in}}$ and $L_{\mathrm{out}}$ each contribute a self term one patch earlier or later ($d_{\mathcal{G}}(n,m)=0$, $|t-s|=1$) and a spatial-neighbor term one patch earlier or later ($d_{\mathcal{G}}(n,m)=1$, $|t-s|=1$, since $\hat{A}(n,m)\neq0$ whenever $m=n$ or $m$ is a spatial neighbor of $n$). 

The order-$K$ Chebyshev recursion of the dual spectral filter is a degree-$K$ polynomial in $L$, so one ChronoSheaf layer's output at $(n,t)$ depends on $(m,s)$ only if $d_{\mathcal{G}}(n,m)+|t-s|\le 2K$. Composing $M$ layers sums this bound $M$ times. $\square$

\subsection{ChronoSheaf Layer}
\label{sec:layer-network}

One ChronoSheaf layer applies the dual spectral filter of Section~\ref{sec:dsf} to the operator $L$ of Section~\ref{sec:laplacian}, wrapped in a residual connection,
\begin{equation}
\mathrm{ChronoSheaf}(h) = \mathrm{LayerNorm}\big(h + \mathrm{Dropout}(W\,g_\theta(L)\,h)\big), \qquad W\in\mathbb{R}^{d\times d},
\label{eq:chronosheaf-layer}
\end{equation}
with the frames $V, F^{\mathrm{in}}, F^{\mathrm{out}}$ (hence $L$) and the coefficients $\theta_{c,0:K}$ learned independently at every layer.
The full network stacks $M$ such layers. Writing $x$ for the patchified, preprocessed input and $e\in\mathbb{R}^{N\times d}$ for a learned per-node embedding broadcast across the batch and every patch, the first layer's input is $h^{(0)} = \mathrm{PE}\big(xW_{\mathrm{in}} + e\big)$,
where $W_{\mathrm{in}}$ projects into the model dimension $d$ and $\mathrm{PE}$ adds a fixed sinusoidal encoding along the patch axis to mark the order in which the temporal edges of $\mathcal{G}^\star$ connect. Each of the $M$ layers then maps $h^{(l-1)}\mapsto h^{(l)}$ as in Eq.~\ref{eq:chronosheaf-layer}.

\section{Experiments}
\label{sec:exp}

We evaluate the performance of ChronoSheaf on several spatio-temporal forecasting benchmarks. In Section~\ref{sec:exp_trm}, we assess ChronoSheaf's ability to mitigate oversmoothing in spatio-temporal learning tasks. In Section~\ref{sec:exp_forecast}, we include the results of traffic, weather, air quality forecasting on six benchmark datasets compared to SOTA baselines. Section~\ref{sec:exp_dsf} examines the effect of spectral filtering on the space-time sheaf Laplacian and how different values of polynomial orders affect performance. Finally, in Section~\ref{sec:exp_multimodal}, we examine the network's ability to retain local representations given heterogeneous features from another modality. Extended results and complete dataset details are provided in the Appendices.

\subsection{Oversmoothing Analysis}
\label{sec:exp_trm}
\begin{wrapfigure}{r}{0.53\textwidth}
\vspace{-17pt}
    \centering
    \includegraphics[width=\linewidth]{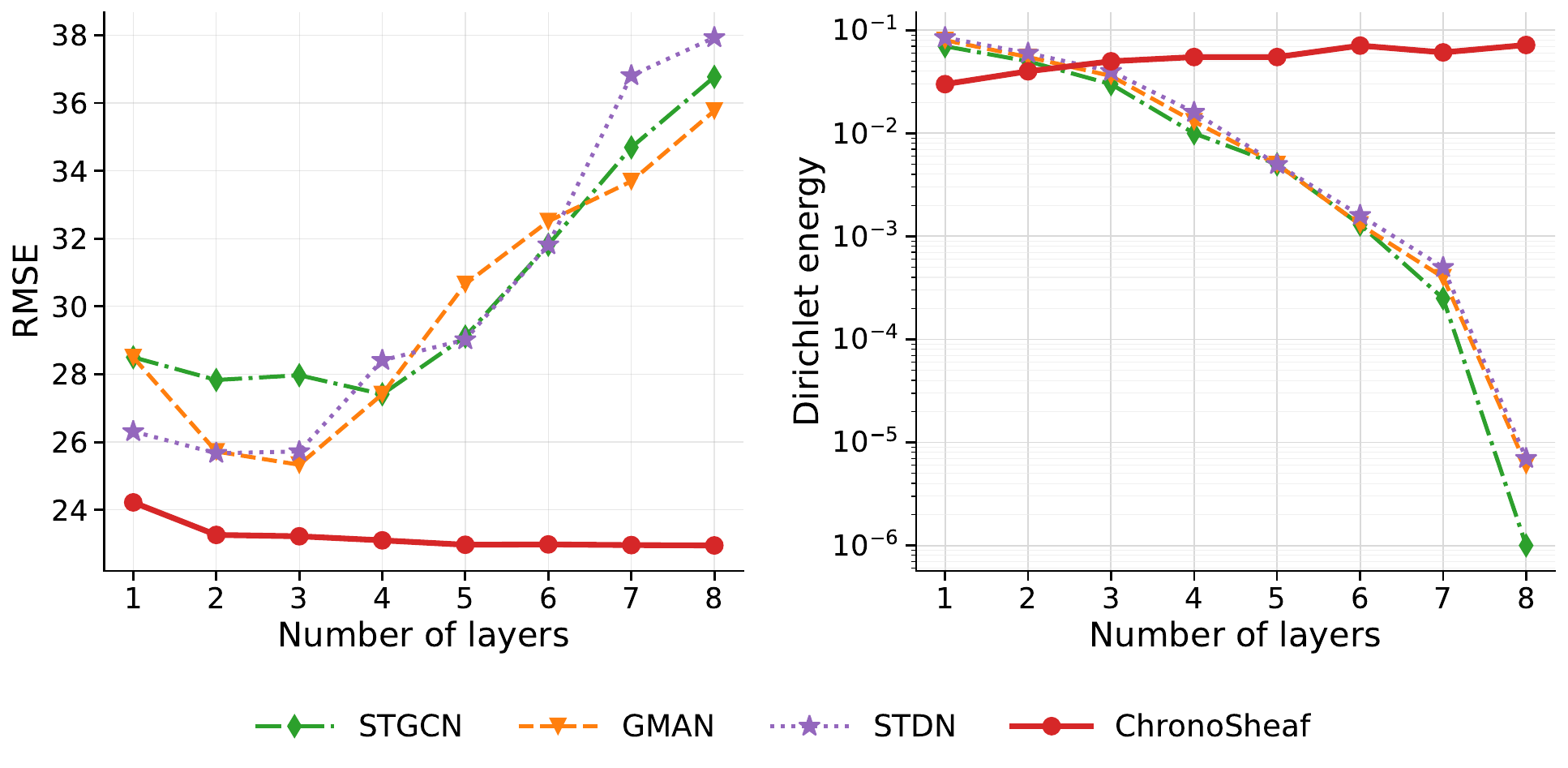}
    \caption{\small Test RMSE (left) and Dirichlet energy (right, log scale) versus
    network depth. Baseline energy collapses with depth, while
    ChronoSheaf's remains stable.}
    \label{fig:depth}
    \vspace{-60pt}
\end{wrapfigure}
We examine whether TRM preserves local representations as the network deepens. On PEMS08,
we stack $M \in \{1, \dots, 8\}$ layers and track both test Root Mean Square Error (RMSE) and the normalized Dirichlet energy:
\begin{equation}
    E(\mathbf{H}) = \frac{N}{|\mathcal{E}|} \cdot
    \frac{\displaystyle\sum_{(i,j)\in\mathcal{E}} \sum_{t=1}^{T}
          \big\lVert \mathbf{h}_{i,t} - \mathbf{h}_{j,t} \big\rVert_2^2}
         {\displaystyle\sum_{i=1}^{N} \sum_{t=1}^{T}
          \big\lVert \mathbf{h}_{i,t} \big\rVert_2^2},
\end{equation}

of the hidden node representations; energy that decays toward zero means node features are becoming indistinguishable. We compare against three representative baselines:
STGCN (convolution-based), GMAN (attention-based), and STDN (the current
state of the art in spatio-temporal forecasting). Results are shown in
Figure~\ref{fig:depth}.

All three baselines degrade sharply with depth, tracking a Dirichlet energy that collapses by four to six orders of magnitude by the eighth layer. This is exactly the known behavior of oversmoothing under repeatedly applied propagation discussed in~\cite{li2019deepgcns} and~\cite{arroyo2026vanishing}. ChronoSheaf's energy stays within roughly one order of magnitude across the same range, and its RMSE saturates as depth increases. 

\subsection{Spatio-temporal Forecasting}
\label{sec:exp_forecast}
\textbf{Setup.} For traffic forecasting, we use PEMS04 and PEMS08 \citep{song2020spatial}, in addition to the San Diego (SD) and full California (CA) subsets of LargeST~\citep{liu2023largest}. All four datasets consist of traffic flow readings from loop detectors aggregated into 5-minute intervals, with sensors as nodes and edges derived from road-network connectivity. Graph sizes vary from $170$ to $8{,}600$ nodes.
Following the experimental setup of~\cite{liu2023largest}, we use 12 historical timesteps as input to predict the next 12 steps. Further details and hyperparameter configurations are provided in Appendix~\ref{app:exp}.
Additionally, we use the Molene dataset~\citep{girault2015stationary} for weather forecasting, which contains hourly temperature readings from 32 stations in Brittany, France, with 10 historical timesteps as input to predict the next 5 steps. We further evaluate ChronoSheaf on the Air Quality dataset~\citep{urbanair_microsoft_research}, which contains hourly $\text{PM}_{2.5}$ concentration readings from 437 monitoring stations across 43 Chinese cities. The input spans 48 hours and predicts the next 48 hours. Full dataset details are provided in Appendix~\ref{app:exp}.

\textbf{Baselines.} We compare ChronoSheaf to established GCN/RNN methods (STGCN~\citep{yu2018spatio}, DCRNN~\citep{li2018diffusion}), adaptive graph learning (Graph WaveNet~\citep{wu2019graph}, CrossGNN~\citep{huang2023crossgnn}), attention-based models (ASTGCN~\citep{guo2019attention}, GMAN~\citep{zheng2020gman}), scalable architectures (SGP~\citep{cini2023scalable}, SparseTSF~\citep{pmlr-v235-lin24n}, FITS \citep{xu2024fits}), spectral methods (CITRUS~\citep{einizade2024continuous}), and the most recent decomposition-based approaches (TimeBase~\citep{huang2025timebase}, STDN~\citep{cao2025spatiotemporal}).
Note on comparison with other Sheaf-based methods: to the best of our knowledge, there is currently no published literature on sheaf-based methods for spatio-temporal modeling; all identified sheaf networks operate exclusively in the spatial domain. Therefore, direct comparisons with analogous sheaf models are not applicable.
\begin{table}[tbp]
\centering
\small
\caption{Traffic forecasting results on LargeST (SD and CA), PEMS04, and PEMS08, averaged over 12 forecasting steps. Best results are in bold and second-best are underlined. "OOM" denotes out of memory.}
\label{tab:res1}
\resizebox{0.99\linewidth}{!}{%
\begin{tabular}{lcccccccccccc}
\toprule
 \multirow{2}{*}{\textbf{Model}} & \multicolumn{3}{c}{\textbf{LargeST (SD)}} & \multicolumn{3}{c}{\textbf{LargeST (CA)}} & \multicolumn{3}{c}{\textbf{PEMS04}} & \multicolumn{3}{c}{\textbf{PEMS08}}\\ 
\cmidrule{2-13}
 & \footnotesize MAE & \footnotesize RMSE & \footnotesize MAPE
 & \footnotesize MAE & \footnotesize RMSE & \footnotesize MAPE & \footnotesize MAE & \footnotesize RMSE & \footnotesize MAPE & \footnotesize MAE & \footnotesize RMSE & \footnotesize MAPE\\
\midrule
STGCN \cite{yu2018spatio}     &  19.67 & 34.14 & 13.86 & 21.33 &  36.39 & 16.53 & 22.70 & 35.55 &  \underline{14.59} & 18.02 & 27.83 &  11.40\\
DCRNN \cite{li2018diffusion}   & 21.03 & 33.37 & 14.13 & 21.87 & 34.41 & 17.06 & 24.70 & 38.12 & 17.12 & 17.86 & 27.83 & 11.45 \\
GW-Net \cite{wu2019graph}     &  \underline{17.74} & \underline{29.62} &  \underline{11.88} & 21.72 & 34.20 & 17.40 & 25.45 & 39.70 & 17.29 & 19.13 & 31.05 & 12.68 \\
GMAN \cite{zheng2020gman}      & 19.63 & 30.01 & 12.20 & OOM & OOM & OOM &  20.23 &  \underline{32.17} & 17.06 &  16.47 & 25.72 &  \underline{10.48}\\
ASTGCN \cite{guo2019attention}  & 23.70 & 37.63 & 15.65 & OOM & OOM & OOM & 22.80 & 35.82 & 16.56 & 18.63 & 28.27 & 13.08 \\
CrossGNN \cite{huang2023crossgnn}  & 23.09 & 39.04 & 15.87 & 26.16 & 41.25 & 20.25 & 25.11 & 40.32 & 20.88 & 21.91 & 35.69 & 16.73 \\
FITS \cite{xu2024fits}  & 21.34 & 38.01 & 14.36 & 20.12 & 35.22 & 15.81 & 20.02 & 33.31 & 16.49 & \underline{15.43} & 26.34 & 10.75 \\
SGP \cite{cini2023scalable}     & 20.25 & 35.06 & 12.33 & OOM & OOM & OOM &21.90 &34.27 & 16.15& 18.06&27.33&13.55\\
CITRUS \cite{einizade2024continuous}  & 18.22 & 30.67 & 12.23 & \underline{19.66} & 33.50 & 15.72 & 22.64 &35.08 & 16.83&17.89&27.81&11.21\\
SparseTSF \cite{pmlr-v235-lin24n} & 25.90 & 46.77 & 18.30 & 23.75 & 43.03 & 18.87 & 22.24 & 38.65 & 17.25 & 16.61 & 30.84 & 11.54 \\
TimeBase \cite{huang2025timebase}  & 19.15 & 32.21 & 13.22 & 20.23 & \underline{32.43} & \underline{14.42} & 23.21 & 41.59 & 16.18 & 17.52 & 35.99 & 11.98 \\
STDN \cite{cao2025spatiotemporal} & 19.01 & 31.25 & 14.10 & OOM & OOM & OOM & \textbf{19.15} & 34.37 & 23.51 &18.26& \underline{25.67}&20.93\\
\midrule
\textbf{ChronoSheaf}  & \textbf{16.42 }& \textbf{28.33} & \textbf{10.72 }& \textbf{17.52} & \textbf{31.66 }& \textbf{14.22} & \underline{19.95} & \textbf{29.11} & \textbf{13.11} & \textbf{14.32} & \textbf{23.26} & \textbf{10.21 }\\
\bottomrule
\end{tabular}%
}
\end{table}

\textbf{Results.} The reported results in Table~\ref{tab:res1} for traffic forecasting are in Mean Absolute Error (MAE), Root Mean Square Error (RMSE), and Mean Absolute Percentage Error (MAPE), averaged over 12 forecasting steps. On the four datasets, ChronoSheaf achieves the best result in 11 of the 12 reported dataset–metric combinations. Table~\ref{tab:res2} reports MAE for weather and air quality forecasting across multiple prediction horizons. ChronoSheaf's advantage over the baselines is most pronounced at longer horizons, which is consistent with its ability to capture extended temporal context.

\begin{table}[htbp]
\centering
\caption{Weather (Molene) and air quality (PM$_{2.5}$) forecasting results in MAE across multiple prediction horizons. Best results are in bold and second-best are underlined.}
\label{tab:res2}
\resizebox{0.8\linewidth}{!}{%
\begin{tabular}{lcccccccc}
\toprule
 \multirow{2}{*}{\textbf{Model}} & \multicolumn{5}{c}{\textbf{Molene (weather)}} & \multicolumn{3}{c}{\textbf{Air Quality ($\text{PM}_{2.5}$)}}\\ 
\cmidrule{2-9}
 & \textit{step-1} & \textit{step-2} & \textit{step-3} & \textit{step-4} & \textit{step-5} 
 & \textit{1-12 hr} & \textit{13-24 hr} & \textit{25-48 hr}\\
\midrule
STGCN \citep{yu2018spatio}     & 1.066& 1.552 & 1.933&2.225 & 2.698 & 22.77 & 31.59 & 35.59\\
DCRNN \citep{li2018diffusion}   & 0.396&  0.635 &  \underline{0.907} &  \underline{1.164} &  \underline{1.419} & 24.28 & 32.95 & 36.11\\
GW-Net \citep{wu2019graph}      & 0.855& 1.124 & 1.455&1.865 & 2.104 & 22.42 & 30.29 & 34.73 \\
GMAN \citep{zheng2020gman}      & 0.962& 1.424 & 1.882& 2.022& 2.417 & 24.11 & 32.80& 36.25\\
ASTGCN \citep{guo2019attention}  & 0.981 & 1.342 & 1.877 &  1.906& 2.125 & 25.51 & 33.98 & 37.42\\
CrossGNN \citep{huang2023crossgnn}   & 0.952 & 1.244 & 1.432 & 1.880 & 2.025 & 25.63 & 33.74 & 36.96\\
FITS \citep{xu2024fits}   &  0.813 & 1.036  & 1.37 & 1.671 & 1.964 & 21.81 & 31.17 & 34.56\\
SGP \citep{cini2023scalable}     &  0.522 & 0.848 & 1.148 & 1.399 & 1.650 & 26.42 &33.38 & 35.99\\
CITRUS \citep{einizade2024continuous}  &  0.497 & 0.757  & 1.048 & 1.332& 1.600 & 27.40 & 32.66 & 35.56\\
SparseTSF \citep{pmlr-v235-lin24n}  & 0.882 & 1.137 & 1.269 & 1.566 & 1.897 & 23.44 & 32.65 & 35.51\\
TimeBase \citep{huang2025timebase}  & 0.574 & 0.821 & 1.036 & 1.478 & 1.988 & \textbf{21.20} & \textbf{29.88} & \underline{32.97}\\
STDN \citep{cao2025spatiotemporal} & \underline{0.352} &  \underline{0.621} & 0.924 &  1.336 & 1.566 & 22.17 &  30.16 &  34.62\\
\midrule
\textbf{ChronoSheaf}  & \textbf{ 0.312} & \textbf{0.524} & \textbf{0.806} & \textbf{1.010} & \textbf{1.242} & \underline{21.31} & \underline{29.90} & \textbf{32.41} \\
\bottomrule
\end{tabular}%
}
\end{table}

\subsection{Spectral Filtering Analysis}
\label{sec:exp_dsf}
We vary the order $k$ of the polynomial filter on the space-time sheaf
Laplacian, which sets the number of hops each layer aggregates over on the PEMS08 dataset (Table~\ref{tab:k_comparison}). Accuracy improves sharply up to $k=4$ and then saturates. Training time grows almost linearly with $k$, as each order adds one Laplacian multiplication.
It should be noted that the spectral filtering approach here inherits the same advantages of the sheaf Laplacian in being more expressive. We therefore select $k$ per dataset to balance predictive accuracy against computational cost, taking into account both graph size and the marginal gain from higher orders. Accordingly, we set $k=2$ for the LargeST dataset and $k=4$ for all other datasets.

\begin{table}[tbp]
\centering
\caption{Effect of the spectral filter order $k$ on test RMSE and training
time.}
\label{tab:k_comparison}
\resizebox{0.9\linewidth}{!}{%
\begin{tabular}{lcccccccccc}
\toprule
Polynomial Filter Order & $k=1$ & $k=2$ & $k=3$ & $k=4$ & $k=5$ & $k=6$ & $k=7$ & $k=8$ & $k=9$ & $k=10$ \\
\midrule
RMSE & 25.96 & 23.92 & 23.26 & 23.20 & 23.15 & 23.15 & 23.05 & 23.04 & 23.01 & 23.01 \\
Training Time (s/epoch) & 8.021 & 11.63 & 15.22 & 19.46 & 23.86 & 26.80 & 30.57 & 34.80 & 37.70 & 40.32 \\
\bottomrule
\end{tabular}}
\end{table}

\subsection{Performance under heterogeneous features}
\label{sec:exp_multimodal}
\begin{wraptable}{r}{0.5\textwidth}
\centering
\caption{\small Traffic forecasting results on PEMS04 and PEMS08 augmented with NOAA weather covariates from the corresponding regions, averaged over 12 forecasting steps. Best results are in bold and second-best are underlined.}
\label{tab:res_multi}
\resizebox{\linewidth}{!}{%
\begin{tabular}{lcccccc}
\toprule
 \multirow{2}{*}{\textbf{Model}} & \multicolumn{3}{c}{\textbf{Multimodal PEMS04}} & \multicolumn{3}{c}{\textbf{Multimodal PEMS08}}\\ 
\cmidrule{2-7}
 & \footnotesize MAE & \footnotesize RMSE & \footnotesize MAPE
 & \footnotesize MAE & \footnotesize RMSE & \footnotesize MAPE \\
\midrule
STGCN \citep{yu2018spatio}  & 21.36 & 34.42 & 13.25 & 17.98 & 27.52 & \underline{11.10} \\
DCRNN \citep{li2018diffusion}  & 21.67 & 35.91 & 14.32 & 18.44 & 27.93 & 13.62 \\
GW-Net \citep{wu2019graph}     & 24.17 & 38.22 & 16.54 & 18.88 & 28.79 & 14.02 \\
GMAN \citep{zheng2020gman}      & 23.63 & 37.67 & 15.50 & 17.08 & 28.41 & 12.66 \\
CITRUS \citep{einizade2024continuous} & 20.34 & 30.11 & 14.89 & 15.56 & 27.31 &  12.10\\
STDN \citep{cao2025spatiotemporal}  & \underline{18.33} & \underline{29.76} & \textbf{11.10} & \underline{14.31} & \underline{24.01} & 11.63 \\
\midrule
\textbf{ChronoSheaf} & \textbf{15.53 }& \textbf{27.22}& \underline{12.47}& \textbf{13.26} & \textbf{22.59 }&\textbf{ 10.55} \\
\bottomrule
\end{tabular}%
}
\end{wraptable}
Real-world sensor networks rarely observe a single, homogeneous signal: traffic dynamics are modulated by exogenous factors, which are recorded by physically distinct stations on different scales and with different semantics. Standard graph diffusion treats all node features as residing in a common space and propagates them through a shared aggregation, which tends to wash out modality-specific information and over-smooth local representations when heterogeneous signals are mixed. To test this, we augment PEMS04 and PEMS08 with meteorological covariates from NOAA weather stations across the same regions (District 4 and District 8) during the same time intervals, and assess whether each model can exploit the additional modality. Baselines whose architectures do not support additional exogenous node features are omitted from this comparison. As shown in Table~\ref{tab:res_multi}, ChronoSheaf achieves the lowest MAE and RMSE on both datasets, indicating that ChronoSheaf can effectively exploit heterogeneous input features without degrading forecasting performance. 

\section{Conclusion}
In this work, we introduced ChronoSheaf, a cellular sheaf defined jointly over space and time, consisting of two main components. First, transitional restriction maps decompose each node's restriction map into a spatial frame and a pair of independently gated frames that map it to its neighbors at the preceding and following timestep, giving a node asymmetric sensitivity to past versus future influence. We proved the resulting joint Laplacian is real, symmetric, and positive semi-definite by construction. Second, the dual spectral filter applies a single polynomial recursion to this operator, with an adaptively estimated spectral radius, giving one filtering pass a receptive field that grows jointly in graph hops and temporal lag. Evaluated on several forecasting benchmarks, ChronoSheaf outperforms established spatio-temporal graph baselines.

\section{Ethics and Reproducibility Statement}

\textbf{Ethics Statement:} We do not foresee any immediate negative societal or ethical impacts of our work.

\textbf{Reproducibility Statement:}
We are committed to ensuring the reproducibility of our results. Detailed experimental setup and dataset descriptions are provided in the appendices. The source code is available at: \url{https://anonymous.4open.science/r/ChronoSheaf-1D7D/}.

\section{AI use statement}
In this work, we used generative AI tools for: aiding and polishing writing, grammar and language checks, reviewing the paper before submission, providing feedback on research methodology and experiments, assisting with translation, summarizing existing literature, and editing the research paper to improve readability.
We have not used generative AI tools for: ideation, developing theoretical models or conceptual frameworks, formulating mathematical claims, generating synthetic data sets, providing critical ingredients for proving mathematical claims, assisting in the writing of proofs, proposing or refining hypotheses, cleaning and reformatting datasets, supporting qualitative and thematic data analysis, or interpreting results. 
We have reviewed all AI-assisted work. We take responsibility for the final content of this work.

\bibliography{iclr2027_conference}

@article{defferrard2016convolutional,
  title={Convolutional neural networks on graphs with fast localized spectral filtering},
  author={Defferrard, Micha{\"e}l and Bresson, Xavier and Vandergheynst, Pierre},
  journal={Advances in neural information processing systems},
  volume={29},
  year={2016}
}

@article{einizade2024continuous,
  title={Continuous product graph neural networks},
  author={Einizade, Aref and Malliaros, Fragkiskos and Giraldo, Jhony H},
  journal={Advances in Neural Information Processing Systems},
  volume={37},
  pages={90226--90252},
  year={2024}
}

@inproceedings{cao2025spatiotemporal,
  title={Spatiotemporal-aware trend-seasonality decomposition network for traffic flow forecasting},
  author={Cao, Lingxiao and Wang, Bin and Jiang, Guiyuan and Yu, Yanwei and Dong, Junyu},
  booktitle={Proceedings of the AAAI Conference on Artificial Intelligence},
  volume={39},
  pages={11463--11471},
  year={2025}
}

@inproceedings{cini2023scalable,
  title={Scalable spatiotemporal graph neural networks},
  author={Cini, Andrea and Marisca, Ivan and Bianchi, Filippo Maria and Alippi, Cesare},
  booktitle={Proceedings of the AAAI conference on artificial intelligence},
  volume={37},
  pages={7218--7226},
  year={2023}
}

@article{huang2023crossgnn,
  title={Crossgnn: Confronting noisy multivariate time series via cross interaction refinement},
  author={Huang, Qihe and Shen, Lei and Zhang, Ruixin and Ding, Shouhong and Wang, Binwu and Zhou, Zhengyang and Wang, Yang},
  journal={Advances in Neural Information Processing Systems},
  volume={36},
  pages={46885--46902},
  year={2023}
}

@InProceedings{pmlr-v235-lin24n,
  title = 	 {{S}parse{TSF}: Modeling Long-term Time Series Forecasting with *1k* Parameters},
  author =       {Lin, Shengsheng and Lin, Weiwei and Wu, Wentai and Chen, Haojun and Yang, Junjie},
  booktitle = 	 {Proceedings of the 41st International Conference on Machine Learning},
  pages = 	 {30211--30226},
  year = 	 {2024},
  volume = 	 {235},
  series = 	 {Proceedings of Machine Learning Research},
  month = 	 {21--27 Jul},
  publisher =    {PMLR}
}

@inproceedings{huang2025timebase,
  title     = {TimeBase: The Power of Minimalism in Efficient Long-Term Time Series Forecasting},
  author    = {Huang, Qihe and Zhou, Zhengyang and Yang, Kuo and Yi, Zhongchao and Wang, Xu and Wang, Yang},
  booktitle = {Proceedings of the Forty-Second International Conference on Machine Learning},
  year      = {2025}
}

@inproceedings{xu2024fits,
  title={FITS: Modeling time series with $10 k $ parameters},
  author={Xu, Zhijian and Zeng, Ailing and Xu, Qiang},
  booktitle={International Conference on Learning Representations},
  volume={2024},
  pages={26295--26318},
  year={2024}
}

@article{bodnar2022neural,
  title={Neural sheaf diffusion: A topological perspective on heterophily and oversmoothing in gnns},
  author={Bodnar, Cristian and Di Giovanni, Francesco and Chamberlain, Benjamin and Lio, Pietro and Bronstein, Michael},
  journal={Advances in Neural Information Processing Systems},
  volume={35},
  pages={18527--18541},
  year={2022}
}

@inproceedings{yu2018spatio,
  title={Spatio-Temporal Graph Convolutional Networks: A Deep Learning Framework for Traffic Forecasting},
  author={Yu, Bing and Yin, Haoteng and Zhu, Zhanxing},
  booktitle={Proceedings of the Twenty-Seventh International Joint Conference on Artificial Intelligence},
  pages={3634--3640},
  year={2018},
  organization={International Joint Conferences on Artificial Intelligence Organization}
}

@inproceedings{wu2019graph,
  title={Graph WaveNet for Deep Spatial-Temporal Graph Modeling},
  author={Wu, Zonghan and Pan, Shirui and Long, Guodong and Jiang, Jing and Zhang, Chengqi},
  booktitle={28th International Joint Conference on Artificial Intelligence (IJCAI-19)},
  year={2019},
  organization={International Joint Conference on Artificial Intelligence (IJCAI)}
}

@inproceedings{zheng2020gman,
  title={Gman: A graph multi-attention network for traffic prediction},
  author={Zheng, Chuanpan and Fan, Xiaoliang and Wang, Cheng and Qi, Jianzhong},
  booktitle={Proceedings of the AAAI conference on artificial intelligence},
  volume={34},
  pages={1234--1241},
  year={2020}
}

@inproceedings{wu2020connecting,
  title={Connecting the dots: Multivariate time series forecasting with graph neural networks},
  author={Wu, Zonghan and Pan, Shirui and Long, Guodong and Jiang, Jing and Chang, Xiaojun and Zhang, Chengqi},
  booktitle={Proceedings of the 26th ACM SIGKDD international conference on knowledge discovery \& data mining},
  pages={753--763},
  year={2020}
}

@article{bai2020adaptive,
  title={Adaptive graph convolutional recurrent network for traffic forecasting},
  author={Bai, Lei and Yao, Lina and Li, Can and Wang, Xianzhi and Wang, Can},
  journal={Advances in neural information processing systems},
  volume={33},
  pages={17804--17815},
  year={2020}
}

@inproceedings{ribeiro2026cooperative,
  title={Cooperative sheaf neural networks},
  author={Ribeiro, Andr{\'e} and Ten{\'o}rio, Ana Luiza and Belieni, Juan and Souza, Amauri and Mesquita, Diego},
  booktitle={International Conference on Learning Representations},
  volume={2026},
  pages={144580--144596},
  year={2026}
}

@inproceedings{chen2021z,
  title={Z-GCNETs: Time zigzags at graph convolutional networks for time series forecasting},
  author={Chen, Yuzhou and Segovia, Ignacio and Gel, Yulia R},
  booktitle={International conference on machine learning},
  pages={1684--1694},
  year={2021},
  organization={PMLR}
}

@misc{urbanair_microsoft_research,
  title        = {Urban Air},
  author       = {{Microsoft Research}},
  howpublished = {\url{https://www.microsoft.com/en-us/research/project/urban-air/}},
  year         = {2026},
  note         = {Accessed: 2026-07-04},
}

@inproceedings{girault2015stationary,
  title={Stationary graph signals using an isometric graph translation},
  author={Girault, Benjamin},
  booktitle={2015 23rd European Signal Processing Conference (EUSIPCO)},
  pages={1516--1520},
  year={2015},
  organization={IEEE}
}

@article{arroyo2026vanishing,
  title={On vanishing gradients, over-smoothing, and over-squashing in gnns: Bridging recurrent and graph learning},
  author={Arroyo, {\'A}lvaro and Gravina, Alessio and Gutteridge, Benjamin and Barbero, Federico and Gallicchio, Claudio and Dong, Xiaowen and Bronstein, Michael and Vandergheynst, Pierre},
  journal={Advances in Neural Information Processing Systems},
  volume={38},
  pages={74356--74393},
  year={2026}
}

@inproceedings{li2019deepgcns,
  title={DeepGCNs: Can GCNs go as deep as CNNs?},
  author={Li, Guohao and Muller, Matthias and Thabet, Ali and Ghanem, Bernard},
  booktitle={Proceedings of the IEEE/CVF international conference on computer vision},
  pages={9267--9276},
  year={2019}
}

@article{liu2023largest,
  title={Largest: A benchmark dataset for large-scale traffic forecasting},
  author={Liu, Xu and Xia, Yutong and Liang, Yuxuan and Hu, Junfeng and Wang, Yiwei and Bai, Lei and Huang, Chao and Liu, Zhenguang and Hooi, Bryan and Zimmermann, Roger},
  journal={Advances in neural information processing systems},
  volume={36},
  pages={75354--75371},
  year={2023}
}

@inproceedings{fiorini2026sheaves,
  title={Sheaves reloaded: A direction awakening},
  author={Fiorini, Stefano and Aktas, Hakan Emre and Duta, Iulia and Morerio, Pietro and Del Bue, Alessio and Lio, Pietro and Coniglio, Stefano},
  booktitle={International Conference on Learning Representations},
  volume={2026},
  pages={5144--5168},
  year={2026}
}

@inproceedings{song2020spatial,
  title={Spatial-temporal synchronous graph convolutional networks: A new framework for spatial-temporal network data forecasting},
  author={Song, Chao and Lin, Youfang and Guo, Shengnan and Wan, Huaiyu},
  booktitle={Proceedings of the AAAI conference on artificial intelligence},
  volume={34},
  pages={914--921},
  year={2020}
}

@article{sabbaqi2023graph,
  title={Graph-time convolutional neural networks: Architecture and theoretical analysis},
  author={Sabbaqi, Mohammad and Isufi, Elvin},
  journal={IEEE Transactions on Pattern Analysis and Machine Intelligence},
  volume={45},
  number={12},
  pages={14625--14638},
  year={2023},
  publisher={IEEE}
}

@inproceedings{li2018diffusion,
  title={Diffusion Convolutional Recurrent Neural Network: Data-Driven Traffic Forecasting},
  author={Li, Yaguang and Yu, Rose and Shahabi, Cyrus and Liu, Yan},
  booktitle={International Conference on Learning Representations},
  year={2018}
}

@inproceedings{guo2019attention,
  title={Attention based spatial-temporal graph convolutional networks for traffic flow forecasting},
  author={Guo, Shengnan and Lin, Youfang and Feng, Ning and Song, Chao and Wan, Huaiyu},
  booktitle={Proceedings of the AAAI conference on artificial intelligence},
  volume={33},
  pages={922--929},
  year={2019}
}

@inproceedings{choi2022graph,
  title={Graph neural controlled differential equations for traffic forecasting},
  author={Choi, Jeongwhan and Choi, Hwangyong and Hwang, Jeehyun and Park, Noseong},
  booktitle={Proceedings of the AAAI conference on artificial intelligence},
  volume={36},
  pages={6367--6374},
  year={2022}
}

@article{bruna2014spectral,
  title={Spectral networks and locally connected networks on graphs},
  author={Bruna, Joan and Zaremba, Wojciech and Szlam, Arthur and LeCun, Yann},
  journal={arXiv preprint arXiv:1312.6203},
  year={2013}
}

@inproceedings{kipf2017gcn,
  title     = {Semi-Supervised Classification with Graph Convolutional Networks},
  author    = {Kipf, Thomas N. and Welling, Max},
  booktitle = {International Conference on Learning Representations (ICLR)},
  year      = {2017}
}

@article{velickovic2018gat,
  title={Graph attention networks},
  author={Veli{\v{c}}kovi{\'c}, Petar and Cucurull, Guillem and Casanova, Arantxa and Romero, Adriana and Lio, Pietro and Bengio, Yoshua},
  journal={arXiv preprint arXiv:1710.10903},
  year={2017}
}

@article{rusch2023survey,
  title={A survey on oversmoothing in graph neural networks},
  author={Rusch, T Konstantin and Bronstein, Michael M and Mishra, Siddhartha},
  journal={arXiv preprint arXiv:2303.10993},
  year={2023}
}

@inproceedings{li2018oversmoothing,
  title     = {Deeper Insights into Graph Convolutional Networks for Semi-Supervised Learning},
  author    = {Li, Qimai and Han, Zhichao and Wu, Xiao-Ming},
  booktitle = {Proceedings of the AAAI Conference on Artificial Intelligence},
  year      = {2018}
}

@inproceedings{alon2021bottleneck,
  title     = {On the Bottleneck of Graph Neural Networks and its Practical Implications},
  author    = {Alon, Uri and Yahav, Eran},
  booktitle = {International Conference on Learning Representations (ICLR)},
  year      = {2021}
}

@article{hansen2019spectral,
  title={Toward a spectral theory of cellular sheaves: J. Hansen, R. Ghrist},
  author={Hansen, Jakob and Ghrist, Robert},
  journal={Journal of Applied and Computational Topology},
  volume={3},
  number={4},
  pages={315--358},
  year={2019},
  publisher={Springer}
}

@inproceedings{barbero2022connection,
  title = 	 {Sheaf Neural Networks with Connection Laplacians},
  author =       {Barbero, Federico and Bodnar, Cristian and S\'aez de Oc\'ariz Borde, Haitz and Bronstein, Michael and Veli\v{c}kovi\'c, Petar and Li\`o, Pietro},
  booktitle = 	 {Proceedings of Topological, Algebraic, and Geometric Learning Workshops 2022},
  pages = 	 {28--36},
  year = 	 {2022},
  volume = 	 {196},
  series = 	 {Proceedings of Machine Learning Research},
  month = 	 {25 Feb--22 Jul},
  publisher =    {PMLR},
}

@inproceedings{bamberger2025bundle,
  title={Bundle neural network for message diffusion on graphs},
  author={Bamberger, Jacob and Barbero, Federico and Dong, Xiaowen and Bronstein, Michael},
  booktitle={International Conference on Learning Representations},
  volume={2025},
  pages={16788--16813},
  year={2025}
}

@article{hajij2025copresheaf,
  title={Copresheaf topological neural networks: A generalized deep learning framework},
  author={Hajij, Mustafa and Bastian, Lennart and Osentoski, Sarah and Kabaria, Hardik and Davenport, John and Cherukuri, Balaji and Kocheemoolayil, Joseph and Shahmansouri, Nastaran and Lew, Adrian and Papamarkou, Theodore and others},
  journal={Advances in Neural Information Processing Systems},
  volume={38},
  pages={149759--149803},
  year={2025}
}

@inproceedings{fang2021stgode,
  title={Spatial-temporal graph ode networks for traffic flow forecasting},
  author={Fang, Zheng and Long, Qingqing and Song, Guojie and Xie, Kunqing},
  booktitle={Proceedings of the 27th ACM SIGKDD conference on knowledge discovery \& data mining},
  pages={364--373},
  year={2021}
}

@article{ma2025mage,
  title={Less but more: Linear adaptive graph learning empowering spatiotemporal forecasting},
  author={Ma, Jiaming and Wang, Binwu and Wang, Guanjun and Yang, Kuo and Zhou, Zhengyang and Wang, Pengkun and Wang, Xu and Wang, Yang},
  journal={Advances in Neural Information Processing Systems},
  volume={38},
  pages={58382--58418},
  year={2025}
}

@inproceedings{marisca2025stoversquashing,
  title     = {Over-squashing in Spatiotemporal Graph Neural Networks},
  author    = {Marisca, Ivan and Bamberger, Jacob and Alippi, Cesare and Bronstein, Michael M.},
  booktitle = {Advances in Neural Information Processing Systems (NeurIPS)},
  volume    = {38},
  pages     = {33969--33999},
  publisher = {Curran Associates, Inc.},
  year      = {2025}
}
\bibliographystyle{iclr2027_conference}

\newpage
\section*{Supplementary Material}
\addcontentsline{toc}{section}{Supplementary Material}
\appendix

\section{Additional Results}
\label{app:res}
\subsection{Predictions Comparison}
Figure~\ref{fig:cheb_pred} complements the spectral filtering analysis of Section~\ref{sec:exp_dsf} by comparing 12-step-ahead predictions of ChronoSheaf with Chebyshev orders $K=2$ and $K=10$ on nine PEMS08 test nodes. On nodes with steep rush-hour ramps (nodes 89, 103, and 144), $K=2$ visibly lags the rising and falling edges of the daily cycle and underestimates peak flow. $K=10$ follows these transitions closely and reduces MAE by a big margin. A higher order therefore lets each node draw on a wider space-time neighborhood, which is most useful when abrupt changes in flow propagate from upstream sensors before they reach the target node. 
\begin{figure}[h]
    \centering
    \includegraphics[width=\linewidth]{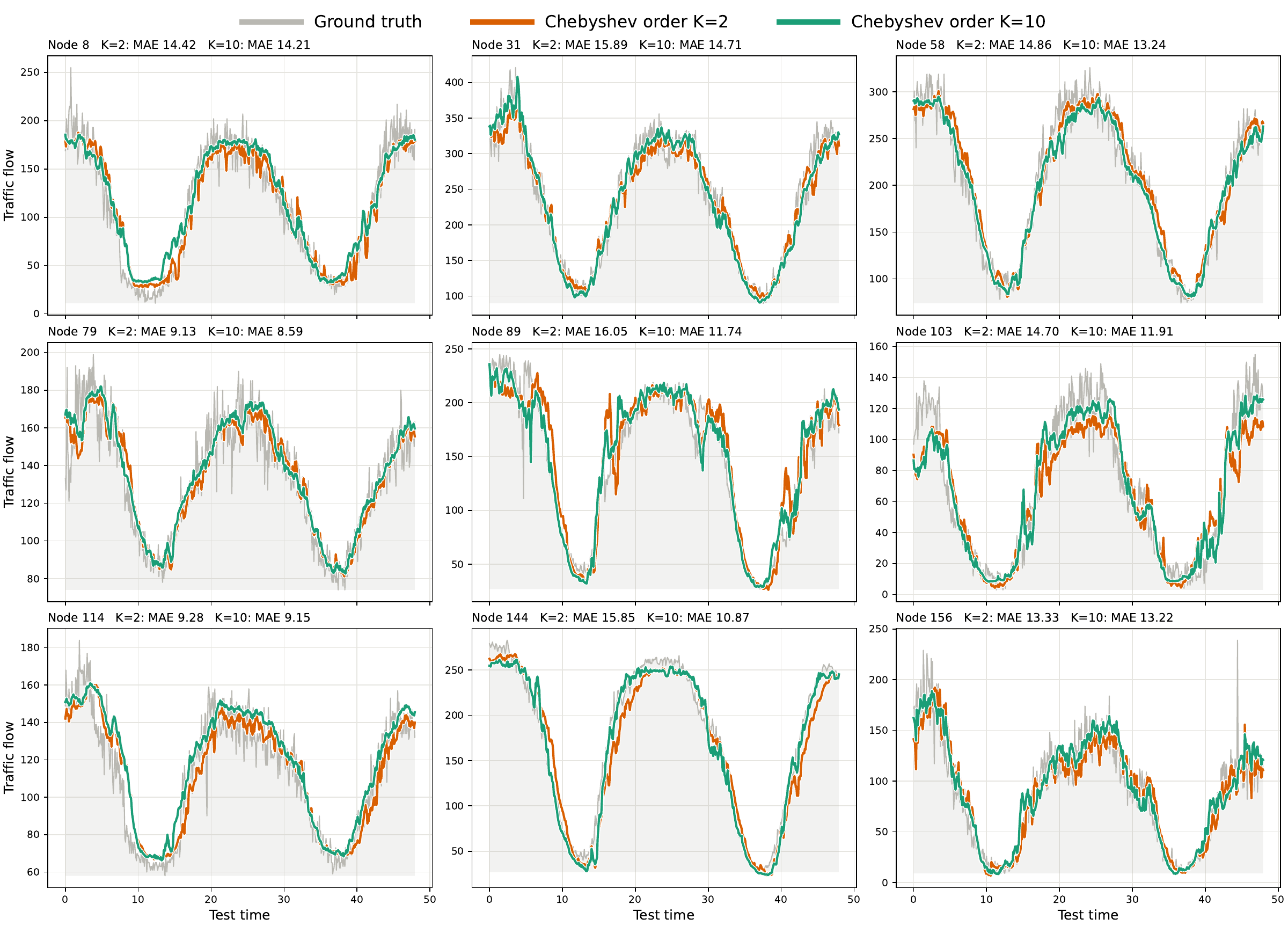}
    \caption{Ground truth versus 12-step-ahead ChronoSheaf predictions on nine PEMS08 test nodes, with Chebyshev order $K=2$ and $K=10$. Per-node MAE is shown above each panel.}
    \label{fig:cheb_pred}
\end{figure}

\subsection{Ablation Studies}
\label{app:ablation}

Table~\ref{tab:ablation} evaluates four variants of ChronoSheaf, each altering one component while keeping all other settings fixed: 
\begin{itemize}
    \item  \emph{trivial sheaf}, with all restriction maps set to the identity.
    \item $F^{\mathrm{in}} = F^{\mathrm{out}}$, tying the two transitional maps.
    \item self-loop temporal edges only.
    \item \emph{fixed $\lambda_{\max}$}, replacing the adaptive bound of Eq.~\eqref{eq:block-gershgorin} with a constant ($\lambda_{\max}=2$).
\end{itemize}

\begin{table}[t]
\centering
\caption{Ablation study on model variants across prediction horizons in MAE (lower is better).}
\label{tab:ablation}
\small
\setlength{\tabcolsep}{4pt}
\begin{tabular}{l ccccc ccc}
\toprule
& \multicolumn{5}{c}{Molene (weather)} & \multicolumn{3}{c}{Air Quality (PM2.5)} \\
\cmidrule(lr){2-6} \cmidrule(lr){7-9}
Variant & step-1 & step-2 & step-3 & step-4 & step-5 & 1--12 hr & 13--24 hr & 25--48 hr \\
\midrule
Trivial sheaf                         & 0.36 & 0.61 & 0.95 & 1.32 & 1.52 & 23.44 & 32.91 & 36.06 \\
$F^{\mathrm{in}} = F^{\mathrm{out}}$  & 0.32 & 0.55 & 0.85 & 1.07 & 1.34 & 21.71 & 30.83 & 33.72 \\
Self-loop temporal edges only         & 0.33 & 0.56 & 0.88 & 1.12 & 1.40 & 22.06 & 31.4 & 34.76 \\
Fixed $\lambda_{\max}$                & 0.32 & 0.55 & 0.84 & 1.05 & 1.29 & 22.05 & 30.88 & 33.41 \\
ChronoSheaf (full)                    & \textbf{0.31} & \textbf{0.52} & \textbf{0.80} & \textbf{1.01} & \textbf{1.24} & \textbf{21.31} & \textbf{29.90} & \textbf{32.41} \\
\bottomrule
\end{tabular}
\end{table}

\section{Complexity and Runtime Analysis}
 
Let $N$, $|\mathcal{E}|$, $P$, $d$, $r$, $K$, and $M$ denote the number of nodes, spatial edges, patches, the model and edge-stalk dimensions, the Chebyshev order, and the number of layers. ChronoSheaf never materializes $L$, which would require $\mathcal{O}(P(N+|\mathcal{E}|)d^2)$ memory. Instead, it computes $Lx$ matrix-free in $\mathcal{O}(P(Nrd + |\mathcal{E}|r))$. It projects stalks into the $r$-dimensional edge space, aggregates along spatial and temporal edges, and projects back. 

The dual spectral filter applies $L$ $K$ times. Because the gates are positive scalars, $\|V_{n,t}\|_2 = g_{n,t,1}\|B_{1,n}\|_2$ (and likewise for $F^{\mathrm{in}}$ and $F^{\mathrm{out}}$), so the spectral-radius bound $\hat\lambda_{\max}$ needs only $3N$ base-frame norms per layer and adds no backward cost. An explicit eigendecomposition would instead cost $\mathcal{O}((NPd)^3)$. The total forward cost is $\mathcal{O}\big(MP[K(Nrd + |\mathcal{E}|r) + Nd^2]\big)$, which is linear in $N$ for sparse graphs with $|\mathcal{E}| = \mathcal{O}(N)$, and also linear in $K$. 

In Table~\ref{tab:runtime}, we include the training time per epoch using different values of the Chebyshev order $K$ on different graph sizes $N$:

\begin{table}[htbp]
\centering
\caption{Training time (seconds/epoch) for different polynomial filter orders $k$ and different graph sizes $N$.}
\label{tab:runtime}
\resizebox{\textwidth}{!}{%
\begin{tabular}{l @{\qquad\textbf{(N =\ }} r @{\textbf{)}\hspace{\tabcolsep}} *{10}{c}}
\toprule
\multicolumn{2}{l}{\textbf{Polynomial Filter Order}} & $\textbf{k=1}$ & $\textbf{k=2}$ & $\textbf{k=3}$ & $\textbf{k=4}$ & $\textbf{k=5}$ & $\textbf{k=6}$ & $\textbf{k=7}$ & $\textbf{k=8}$ & $\textbf{k=9}$ & $\textbf{k=10}$ \\
\midrule
Molene       & \textbf{32}   & 1.25   & 1.82   & 2.36   & 3.04    & 3.71    & 4.19    & 4.75    & 5.43    & 5.87    & 6.29    \\
PEMS08       & \textbf{170}  & 8.021  & 11.63  & 15.22  & 19.46   & 23.86   & 26.80   & 30.57   & 34.80   & 37.70   & 40.32   \\
PEMS04       & \textbf{307}  & 14.73  & 21.28  & 28.04  & 35.61   & 43.95   & 49.07   & 56.31   & 63.72   & 69.40   & 73.89   \\
Air Quality  & \textbf{437}  & 23.60  & 34.31  & 44.65  & 57.38   & 70.04   & 79.02   & 89.77   & 102.56  & 110.71  & 118.85  \\
LargeST (SD) & \textbf{716}  & 34.89  & 50.47  & 66.38  & 84.51   & 103.96  & 116.42  & 133.18  & 151.14  & 164.23  & 175.20  \\
LargeST (CA) & \textbf{8600} & 422.17 & 611.48 & 802.35 & 1022.91 & 1257.40 & 1408.76 & 1611.02 & 1829.35 & 1986.71 & 2119.84 \\
\bottomrule
\end{tabular}%
}
\end{table}

\section{Experimental Details}
\label{app:exp}
This appendix describes the training protocol, evaluation metrics, and
hyperparameters used for all experiments in Section~\ref{sec:exp}. Unless stated
otherwise, the same protocol is applied to ChronoSheaf and to all baselines.
 
\subsection{Data Preprocessing}
 
Each dataset is split chronologically into training,
validation, and test sets. Following the protocols of the original
benchmarks, PEMS04 and PEMS08~\citep{song2020spatial} and the two LargeST
subsets~\citep{liu2023largest} use a 6:2:2 ratio. Molene and Air Quality use a
7:1:2 ratio.
Samples are extracted with a stride-one sliding window
of $T_{\text{in}}$ input steps followed by $T_{\text{out}}$ target steps.
Windows are extracted separately within each split, so that no window crosses
a split boundary.
 All input channels are z-score standardized
independently, using the mean and standard deviation computed on the training
split only. The forecast target is
standardized with its own training-split statistics. Model outputs are mapped
back to the original scale before the training loss and the evaluation
metrics are computed. The same normalization is applied to all baselines.
 
\subsection{Training}

We use four spatio-temporal blocks and set the rank of the restriction maps to 4. The model is trained using the Adam optimizer with an initial learning rate of $2.5\times10^{-3}$, $\epsilon = 10^{-4}$, and weight decay of $10^{-4}$, with the global gradient norm clipped at 5. Training windows are drawn from the training split with a stride of one, reshuffled at the start of every epoch, and grouped into mini-batches of 16. The maximum number of training epochs is set to 250. After each epoch, the model is evaluated on the validation split using the validation RMSE, computed on the original scale with missing entries masked. The learning rate is multiplied by $0.7$ whenever the validation RMSE does not improve for 10 consecutive epochs, with a lower bound of $10^{-5}$.

\subsection{Evaluation}
\label{app:exp-eval}
Let $y_{i,t}$ and $\hat{y}_{i,t}$ denote the ground truth and the prediction
for node $i$ at forecast step $t$, on the original scale. Let $\Omega$ be the
set of observed target entries. We report
$\text{MAE} = \frac{1}{|\Omega|}\sum_{(i,t)\in\Omega}
                   \left|y_{i,t}-\hat{y}_{i,t}\right|, $
    $\text{RMSE}= \sqrt{\frac{1}{|\Omega|}\sum_{(i,t)\in\Omega}
                   \left(y_{i,t}-\hat{y}_{i,t}\right)^2},$ and 
    $\text{MAPE}= \frac{100\%}{|\Omega|}\sum_{(i,t)\in\Omega}
                   \left|\frac{y_{i,t}-\hat{y}_{i,t}}{y_{i,t}}\right|$.
Unless a specific horizon is stated, metrics are averaged over all forecast steps and all test windows. 

All experiments were run on a single NVIDIA A100 GPU
(80 GB). Each configuration was trained and evaluated five times
with random seeds $\{42, 43, 44, 45, 46\}$, and all reported numbers are the
mean over the five runs.
  
\subsection{Dataset Details}
\label{app:data}
 Table~\ref{tab:app-data-stats} summarizes the six benchmarks.
\begin{table}[h]
\centering
\caption{Summary of dataset statistics.}
\label{tab:app-data-stats}
\resizebox{\linewidth}{!}{%
\begin{tabular}{llccccc}
\toprule
Dataset & Domain & Nodes & Time steps & Interval & Period & Target \\
\midrule
PEMS04       & Traffic     & 307   & 16,992 & 5 min  & 01/2018--02/2018 & Flow \\
PEMS08       & Traffic     & 170    & 17,856 & 5 min  & 07/2016--08/2016 & Flow \\
LargeST (SD) & Traffic     & 716   & 525,888 & 5 min & 01/2017--12/2021 & Flow \\
LargeST (CA) & Traffic      & 8,600 & 525,888 & 5 min & 01/2017--12/2021 & Flow \\
Molene       & Weather     & 32    & 744   & 1 h    & 01/2014          & Temperature \\
Air Quality  & Air quality & 437  & 8,760 & 1 h    & 05/2014--04/2015 & PM2.5 \\
\bottomrule
\end{tabular}}
\end{table}
 
\paragraph{Multimodal PEMS04 and PEMS08.}
\label{app:data-noaa}
For the heterogeneous-feature experiment, we augment PEMS04 and PEMS08 with
meteorological covariates from NOAA weather stations located in
District~4 and District~8, over the same periods as the traffic data (January
to February 2018 and July to August 2016, respectively). We retain four variables: dry-bulb temperature ($^\circ$C),
hourly precipitation (in), visibility (mi), and wind speed (mph). The hourly NOAA observations are aligned to the 5-minute traffic resolution
by broadcasting. NOAA data did not have any missing values. All channels are standardized independently with
training-split statistics, and traffic flow remains the sole forecast target.

\end{document}